\tikzstyle{decision} = [diamond, draw, fill=blue!20, 
\tikzstyle{block} = [rectangle, draw, fill=blue!20, 
\tikzstyle{line} = [draw, -latex']
\tikzstyle{cloud} = [draw, ellipse,fill=red!20, node distance=3cm,
\tikzset{main node/.style={circle,fill=blue!20,draw,minimum size=1cm,inner sep=0pt},  }
\g@addto@macro{\endabstract}{\@setabstract}
\newcommand{\authorfootnotes}{\renewcommand\thefootnote{\@fnsymbol\c@footnote}}%
\newcommand{\titlemark}{Projected Wasserstein gradient descent for high-dimensional Bayesian inference}
\title{\textbf{\titlemark}}
\begin{document}
\begin{center}
  \LARGE 
  \titlemark
  \par \bigskip

  \normalsize
  \authorfootnotes
  Yifei Wang\footnote{\texttt{wangyf18@stanford.edu}}\textsuperscript{1},
  Peng Chen \footnote{\texttt{peng@oden.utexas.edu}}\textsuperscript{2} and 
  Wuchen Li\footnote{\texttt{wuchen@mailbox.sc.edu}}\textsuperscript{3}
 \par \bigskip
 \textsuperscript{1}Department of Electrical Engineering, Stanford University \par
  \textsuperscript{2} Oden Institute for Computational Engineering and Sciences, The University of Texas at Austin\par
   \textsuperscript{3}Department of Mathematics, University of South Carolina\par \bigskip
\end{center}

\begin{abstract}
We propose a projected Wasserstein gradient descent method (pWGD) for high-dimensional Bayesian inference problems. The underlying density function of a particle system of WGD is approximated by kernel density estimation (KDE), which faces the long-standing curse of dimensionality. We overcome this challenge by exploiting the intrinsic low-rank structure in the difference between the posterior and prior distributions. The parameters are projected into a low-dimensional subspace to alleviate the approximation error of KDE in high dimensions. We formulate a projected Wasserstein gradient flow and analyze its convergence property under mild assumptions. Several numerical experiments illustrate the accuracy, convergence, and complexity scalability of pWGD with respect to parameter dimension, sample size, and processor cores.
\end{abstract}

\section{Introduction}
Bayesian inference is of central importance for inverse problems across many fields in machine learning, uncertainty quantification, data assimilation for computational science and engineering.
It provides a powerful tool and principled framework in quantifying uncertainty of a given system by fusing complex data and models. Given observations of a system with uncertain parameters, Bayesian inference characterizes the optimal distribution of parameters by drawing samples from the posterior distribution. The key task is to efficiently draw samples from the posterior distribution, especially for high-dimensional parameter spaces.

Classical approaches for Bayesian inference include Markov chain Monte Carlo (MCMC) based methods and transport-based variational inference. 
 Many MCMC methods \citep{acc_diff, hmcmc, blvsg, sgmcmc, nrsss, itaao, ildgs} simulate a continuous-time diffusion process which keeps the posterior distribution as the invariant distribution. It is known that the Kolmogorov forward generator of overdamped Langevin dynamics in MCMC is equivalent to the gradient descent direction of the Kullback-Leibler (KL) divergence in probability space with the Wasserstein metric \citep{jko, gfims}. In this view point, these MCMC-related diffusion processes can also be viewed as modified Wasserstein gradient flows. 

Closely related to this perspective, transport-based variational inference methods \citep{SVGD, Liu2017_steina, asvnm, otgos} 
seek to minimize the KL divergence between the transported distribution and the posterior distribution. They can also be viewed as gradient descent methods or Newton's methods in generalized Wasserstein space. Similarly, the Wasserstein Newton's method has been discussed in \citep{infs}. 

However, all of these sampling methods may face the curse of dimensionality for solving high-dimensional Bayesian inference problems. Namely, to achieve a certain level of accuracy, the computational complexity of Wasserstein gradient direction of Boltzmann-Shannon entropy grows rapidly (usually exponential) with respect to the parameter dimension. To alleviate the curse of dimensionality, different approaches have been developed by exploiting the sparsity \citep{MarzoukNajmRahn07a, SchwabStuart12,SchillingsSchwab13,ChenSchwab15, ChenSchwab16,ZechMarzouk20} or intrinsic low dimensionality of the posterior distribution with respect to the parameter \cite{Bui-ThanhBursteddeGhattasEtAl12_gbfinalist, Bui-ThanhGhattasMartinEtAl13,CuiLawMarzouk16,ChenVillaGhattas17,BeskosGirolamiLanEtAl17,ZahmCuiLawEtAl18,BrennanBigoniZahmEtAl20}. 
In fact, the intrinsic low dimensionality, as revealed by derivative information with respect to the parameter, e.g., gradient, Hessian, or higher-order derivatives, has been observed in many uncertainty quantification problems such as model reduction for sampling and deep learning \cite{BashirWillcoxGhattasEtAl08,ChenGhattas19a, AlgerChenGhattas20, OLeary-RoseberryVillaChenEtAl20}, optimization under uncertainty \cite{AlexanderianPetraStadlerEtAl17,ChenVillaGhattas19, ChenHabermanGhattas21}, and Bayesian optimal experimental design \citep{AlexanderianPetraStadlerEtAl16,CrestelAlexanderianStadlerEtAl17,WuChenGhattas20}. 
In particular, the low dimensionality detected by gradient and Hessian information has been used in projected Stein variational gradient descent \citep{ChenGhattas20} and Newton \citep{ChenWuChenEtAl19} methods, respectively, to solve high-dimensional Bayesian inference problems. The complexity of these methods grows slowly or remains the same with increasing parameter dimension.

\textbf{Contributions:}
In this paper, we present a projected version of Wasserstein gradient descent method (WGD) to sample from high-dimensional posterior distributions. It integrates the merits of the fast convergence of Wasserstein gradient flow with theoretical guarantee and favorable scaling to high dimensions with certified projection errors. Specifically, in implementing the WGD we use a kernel density estimation (KDE) as an approximation of a log density term---the gradient of entropy. However, it is well known that KDE suffers from the curse of dimensionality, i.e., the approximation accuracy quickly deteriorates with respect to the parameter dimension, which renders WGD samples an inaccurate representation of the posterior in high dimensions. To address this challenge, we project the parameters to a low-dimensional subspace, which is constructed by the gradient information of the log-likelihood. Then the KDE is only built in the low-dimensional subspace. To further reduce the subspace dimension, we propose a batched version of KDE, which allows us to use KDE in a batched subspace of (arbitrarily) small dimensions. Based on the projection of the parameters and KDE, we develop a projected WGD (pWGD) and formulate a corresponding Wasserstein gradient flow (WGF) in the subspace. Under suitable assumptions, we show that this projected WGF can be embedded in the full space as a special WGF, which we further prove to converge as fast as (if not faster than) the original WGF without projection. Furthermore, thanks to the parallel property of the ensemble transport of the samples, we also develop a parallel implementation of pWGD that distribute samples to multiple processors. 
Finally we demonstrate the (higher) accuracy and (faster) convergence compared to WGD, SVGD, pSVGD, as well as the scalability of computational complexity with respect to parameter dimension, sample size, and processor cores. A variety of numerical experiments are conducted with both low and high-dimensional parameters, both synthetic and real-world data, both toy model and simplified differential equation models used in environmental engineering, geoscience, and epidemics for COVID-19.

The rest of the paper is organized as follows. In section \ref{sec:lws}, we introduce three major sampling methods, Langevin MCMC, WGD and SVGD with analysis on their differences and relations. The proposed pWGD algorithm is presented and analyzed through the lens of WGF in Section \ref{sec:pwgd}. Numerical experiments are presented in section \ref{sec:num}. 

\section{Langevin MCMC, WGD and SVGD}\label{sec:lws}
Let $x\in \mbR^d$ denote a random parameter with prior density $p_0:\mbR^d\to\mbR$. Let $y=\{y_i\}_{i=1}^m$ denote a set of i.i.d. observation data. We denote 
$$
f(x)=\prod_{i=1}^mp(y_i|x),
$$
as a continuous likelihood of $y$ at point $x$. Then, the posterior density of $x$ conditioned on data $y$ is given by the Bayes' rule
$$
\pi(x)=\frac{1}{Z} p_0(x)f(x),
$$
where $Z$ is a normalization constant defined as
$$
Z=\int p_0(x)f(x)dx,
$$
which is often intractable to compute. The central task of Bayesian inference is to draw samples following the posterior distribution with density $\pi(x)$. Then the samples can be used to compute some statistical quantities, e.g., mean, variance, quantiles, to facilitate system optimization and decision making under uncertainty.

Langevin MCMC is a classical approach for sampling from Bayesian posterior. Briefly, to sample from a target distribution with density $\pi$, we evolve a particle system $\{X_l^n\}_{n=1}^N$ according to the update rule
\begin{equation}\label{equ:lmcmc}
x_{l+1}^n = x_l^n+\alpha_l \nabla \log \pi(x_l^n) -\sqrt{2\alpha }Z_l^n,
\end{equation}
where $Z_l^n\sim \mcN(0,I_d)$ follows the standard Gaussian distribution and $\alpha_l>0$ is the step size. 

From another perspective, to sample from a target density, we can also consider a deterministic update rule by Wasserstein gradient descent (WGD):
\begin{equation}\label{equ:wgd}
x_{l+1}^n =x_l^n+\alpha_l(\nabla \log \pi(x_l^n)-\xi_l(x_l^n)),
\end{equation}
where $\xi_l:\mbR^d\to \mbR^d$ is a vector-valued function which approximates $\nabla \log \rho_l$. Here we assume that $\{x_l^n\}_{n=1}^N$ follows an underlying distribution $\rho_l$. Compared to Langevin MCMC, WGD has a deterministic update rule with particle interactions and the randomness only comes from the initial positions of the particle system. 

In this paper, we use the kernel density estimation (KDE) $\tilde \rho_l(x)=\sum_{n=1}^N k(x,x^n_l)$ to approximate $\rho_l(x)$. Here $k(x,x'):\mbR^d\times \mbR^d\to \mbR$ is a given positive kernel function. Using KDE as an approximation of $\rho_l$, the mapping $\xi_l(x)$ is given by
$$
\xi_l(x)=\nabla \log \tilde \rho_l(x)=\frac{\sum_{n=1}^N\nabla_x k(x,x_l^n)}{\sum_{n=1}^Nk(x,x_l^n)}.
$$
One key drawback of KDE is that its approximation of the gradient of the log density term $\nabla \log \rho_l$ is often significantly deteriorated in high dimensions, known as curse of dimensionality, see \cite{fomde, nkdea}.

Another deterministic update rule was proposed by \citet{SVGD} as a variational inference method, which is called Stein variational gradient descent (SVGD). In each iteration, it updates the particles as 
\begin{equation}\label{equ:svgd}
    x_{l+1}^n = x^n_l+\alpha_l \hat \phi_l(x_l^n),
\end{equation}
where $\hat \phi_l:\mbR^d\to \mbR^d$ is given by
$$
\hat \phi_l(x) = \frac{1}{N}\sum_{n=1}^N\pp{\nabla_{x_l^m}\log \pi(x_l^m) k(x_l^m,x)+\nabla_{x_l^m} k(x_l^m,x)}. 
$$
Here $k(x,y)$ is a positive kernel function.

\subsection{Relation}
We present a continuous-time dynamical viewpoint to illustrate that the particle system $\{X_l^n\}_{n=1}^N$ generated by Langevin MCMC \eqref{equ:lmcmc}, WGD \eqref{equ:wgd}, and SVGD \eqref{equ:svgd} approximate the posterior distribution.

With $\alpha_l\to 0$, the Langevin MCMC \eqref{equ:lmcmc} can be viewed as the time discretization of the overdamped Langevin dynamic:
\begin{equation}\label{equ:old}
dx_t = \nabla \log \pi(x_t) +\sqrt{2}dW_t,
\end{equation}
where $W_t$ is the standard Brownian motion. 
The probability density $\rho_t$ of the overdamped Langevin dynamics \eqref{equ:old} satisfies the Fokker-Planck equation
\begin{equation}\label{equ:fp}
\p_t\rho_t = -\nabla\cdot(\rho_t\nabla \log \pi)+\Delta \rho_t.
\end{equation}

In continuous time, WGD approximates the following deterministic mean-field particle system, namely mean-field Wasserstein dynamics:
\begin{equation}\label{equ:mfwd}
dx_t = \pp{ \nabla \log \pi(x_t)-\nabla\log \rho_t(x_t)}dt.
\end{equation}
Here the terminology mean-field implies that the evolution of dynamics \eqref{equ:mfwd} depends on the current density function $\rho_t$. The Liouville equation of the dynamic system \eqref{equ:mfwd} writes
\begin{equation}\label{equ:lio}
\p_t\rho_t = -\nabla\cdot(\rho_t(\nabla \log \pi-\nabla\log \rho_t)).
\end{equation}
We note that equations \eqref{equ:fp} and \eqref{equ:lio} are identical by
\begin{equation*}
\begin{aligned}
\p_t\rho_t = &-\nabla\cdot(\rho_t(\nabla \log \pi-\nabla \log \rho_t))\\
=&-\nabla\cdot(\rho_t\nabla \log \pi)+\Delta \rho_t.
\end{aligned}
\end{equation*}
This is because $\nabla\cdot(\rho_t\nabla\log\rho_t)=\nabla\cdot(\nabla \rho_t)=\Delta\rho_t$.

From an optimization perspective, they all correspond to the gradient flow of the KL divergence under the Wasserstein metric. To be concrete, consider the following optimization problem in the probability space
\begin{equation}\label{prob}
\inf_{\rho \in \mcP(\Omega)} \mathrm{D}_\mathrm{KL}(\rho\|\pi)=\int (\log \rho -\log \pi )\rho dx.
\end{equation}
Here $\mcP(\Omega)=\{\rho\in \mcF(\Omega)|\int \rho dx=1,\rho \geq 0\}$, $\mcF(\Omega)$ denotes the set of all smooth functions on $\Omega$ and $\mathrm{D}_\mathrm{KL}(\rho\|\pi)$ evaluates the KL divergence from $\rho$ to $\pi$. The Wasserstein gradient flow of $\mathrm{D}_\mathrm{KL}(\rho\|\pi)$ writes
\begin{equation}\label{equ:wgf}
\p_t\rho_t= \nabla\cdot\pp{\rho_t\nabla \frac{\delta}{\delta \rho_t} \mathrm{D}_\mathrm{KL}(\rho_t\|\pi)}.
\end{equation}
Here $\frac{\delta }{\delta \rho}$ represents the $L^2$ first-order variation operator. A brief introduction to metrics and gradient flows in the probability space can be found in 
\citep{gfims}. 

By taking the KL divergence into the Wasserstein gradient flow \eqref{equ:wgf}, we immediately recover the Fokker-Planck equation \eqref{equ:fp} and the Liouville equation \eqref{equ:lio}. As a result, with $t\to \infty$, $\rho_t$ following the gradient flow \eqref{equ:wgf} asymptotically converges to the unique minimizer $\pi$ of \eqref{prob}. 



On the other hand, SVGD approximates the following particle dynamics, called mean-field Stein dynamics:
\begin{equation}\label{dyn:svgd}
dx_t = \phi_t(x_t)dt,    
\end{equation}
where $\phi_t:\mbR^d\to \mbR^d$ is a transport mapping given by
$$
\phi_t(x) = \int \pp{\nabla_y\log\pi(y)k(y,x)+\nabla_y k(y,x)}\rho_t(y)dy.
$$
Indeed, $\phi_t$ approximates $\nabla \log \pi-\nabla \log \rho_t$ in the reproducing kernel Hilbert space (RKHS) \citep[Theorem 2]{uaapb}.  
The Liouville equation of the dynamics \eqref{dyn:svgd} writes:
\begin{equation}
    \p_t\rho_t = -\nabla\cdot(\rho_t \phi_t).
\end{equation}
In short, the above equation is an approximation of the Liouville equation \eqref{equ:lio} in RKHS. 


The connections among Langevin MCMC, WGD and SVGD are explained in Figure \ref{fig:rel}.

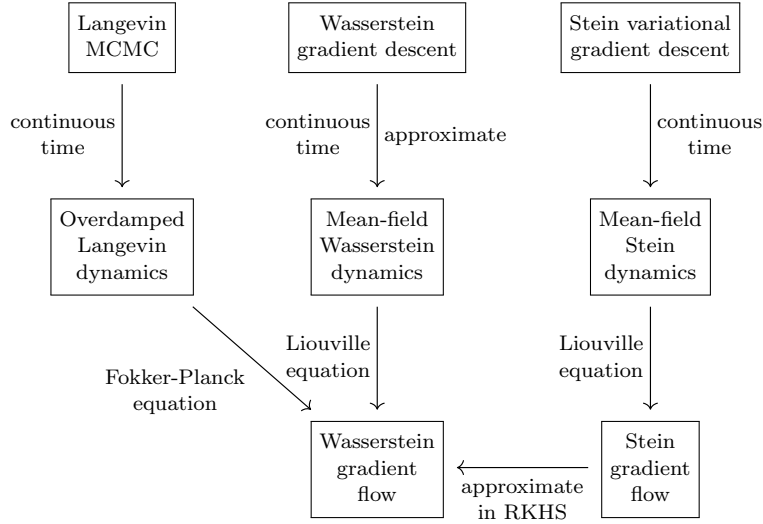
\begin{figure}[ht]
\centering
 \begin{tikzcd}[row sep=huge]{\scriptsize
\fbox{\Centerstack[c]{ Langevin \\ MCMC }}} 
\arrow[d,"{\scriptsize\setlength{\fboxsep}{0pt}
  \setlength{\fboxrule}{0pt}\fbox{\Centerstack[c]{continuous\\time}}}"']
& 
{\scriptsize\fbox{\Centerstack[c]{Wasserstein\\ gradient descent}}}
\arrow[d,"{\scriptsize\setlength{\fboxsep}{0pt}
  \setlength{\fboxrule}{0pt}\fbox{\Centerstack[c]{continuous\\time}}}"',"{\scriptsize\setlength{\fboxsep}{0pt}
  \setlength{\fboxrule}{0pt}\fbox{\Centerstack[c]{approximate}}}"]
&{\scriptsize\fbox{\Centerstack[c]{Stein variational\\gradient descent}}}
\arrow[d,"{\scriptsize\setlength{\fboxsep}{0pt}
  \setlength{\fboxrule}{0pt}\fbox{\Centerstack[c]{continuous\\time}}}"]  \\
{\scriptsize\fbox{\Centerstack[c]{Overdamped\\ Langevin \\dynamics}}} \arrow[rd,"{\scriptsize\setlength{\fboxsep}{0pt}
  \setlength{\fboxrule}{0pt}\fbox{\Centerstack[c]{Fokker-Planck \\ equation}}}"'] 
& {\scriptsize\fbox{\Centerstack[c]{Mean-field\\Wasserstein \\ dynamics}}} 
\arrow[d,"{\scriptsize\setlength{\fboxsep}{0pt}
  \setlength{\fboxrule}{0pt}\fbox{\Centerstack[c]{Liouville \\ equation}}}"'] 
& {\scriptsize\fbox{\Centerstack[c]{Mean-field\\Stein \\ dynamics}}} \arrow[d,"{\scriptsize\setlength{\fboxsep}{0pt}
  \setlength{\fboxrule}{0pt}\fbox{\Centerstack[c]{Liouville \\ equation}}}"']   \\
& {\scriptsize{\fbox{\Centerstack[c]{Wasserstein \\gradient \\ flow}}}}
& {\scriptsize{\fbox{\Centerstack[c]{Stein \\gradient \\ flow}}}} \arrow[l,"{\scriptsize\setlength{\fboxsep}{0pt}\setlength{\fboxrule}{0pt}\fbox{\Centerstack[c]{approximate \\ in RKHS}}}"]
  \end{tikzcd}
\caption{\small Illustration of connections for Langevin MCMC, WGD, and SVGD and their corresponding dynamics and continuous flow.}\label{fig:rel}
\end{figure}


\section{Projected Wasserstein gradient descent}\label{sec:pwgd}
As previously mentioned, the high dimensionality $d$ may impede the approximation of $\nabla \log \rho_l$ in WGD. To tackle this curse of dimensionality, we employ a projection method for dimension reduction to evolve particles in a $r$-dimensional subspace of $\mbR^d$. Here $r$ is usually significantly smaller than $d$. To find such a data-informed parameter subspace, we use the gradient information of the log-likelihood by defining a matrix $H\in \mbR^{d\times d}$ as
\begin{equation}\label{equ:h}
    H = \int \nabla \log f (\nabla \log f)^T \pi dx.
\end{equation}
Suppose that the prior density $p_0$ is given in the general form of
\begin{equation}\label{prior_stru}
p_0\propto \exp(-V-W),    
\end{equation} 
where $\nabla^2V(x)\succeq \Gamma$ for a positive semi-definite matrix $\Gamma$ and $W$ is a bounded function. For instance, $\Gamma$ can be the covariance of a Gaussian prior. Let $(\lambda_i,\psi_i)_{i=1}^r$ denote the dominant generalized eigen-pair of $(H,\Gamma)$, where $(\lambda_i,\psi_i)$ corresponds to the $i$-th largest generalized eigenvalue
\begin{equation}\label{eq:genEignen}
H\psi_i=\lambda_i\Gamma\psi_i.    
\end{equation}

An important observation is that the eigenvalue $\lambda_i$ evaluates the sensitivity of the variation of likelihood w.r.t. parameters along the direction $\psi_i$. In other words, the likelihood function $f$ does not change much in directions $\psi_i$ with small eigenvalue $\lambda_i$. In practice, we truncate at $r$ such that $\lambda_r \geq \varepsilon \geq \lambda_{r+1}$ for a given tolerance $\varepsilon$. Moreover, we apply a randomized SVD algorithm \citep{SaibabaLeeKitanidis16} to compute $(\lambda_i,\psi_i)_{i=1}^r$, which takes $O(r)$ matrix-vector products in computation cost.

Define a linear projection operator:
$$
P_r x = \sum_{i=1}^r \lra{\psi_i,x}\psi_i=\Psi_r w.
$$
Here we denote $\Psi_r = (\psi_1,\dots,\psi_r)\in \mbR^{d\times r}$ and $w\in \mbR^r$ with entries $w_i=\lra{\psi_i,x}$. Based on this projection operator, we seek a profile function $g:\mbR^r\to \mbR^m$ such that $g(P_rx)$ is a good approximation of the likelihood $f(x)$. For a given profile function $g$, we define a projected density $\pi_r:\mbR^d\to\mbR$ as follows
$$
\pi_r(x) := \frac{1}{Z_r} g(P_r x)p_0(x),
$$
where $Z_r$ is a normalization constant defined by $Z_r=\int g(P_r x)p_0(x)dx$. According to \citep{ZahmCuiLawEtAl18}, there exists an optimal profile function $g^*$ such that
$$
\mathrm{D}_\mathrm{KL}(\pi |\pi_r^*)\leq \mathrm{D}_\mathrm{KL}(\pi|\pi_r).
$$
Here $\pi_r^*$ is the projected density defined by the optimal profile function $g^*$. This profile function can be explicitly given by 
$$
g^*(x) =\int_{X^\perp} f(P_rx+z)p_0^\perp(z|P_rx)dz.
$$
Here $X^\perp$ is the complement of the subspace $X_r$ spanned by $\psi_1,\dots,\psi_r$ and the density function $p_0^\perp(z|P_rx)$ is defined by
$$
p_0^\perp (z|P_rx):=p_0(z+P_rx)/p_0^r(P_rx),
$$
where $p_0^r(P_rx):=\int_{X^\perp} p_0(P_rx+z)dz$.

Moreover, under the assumption that the prior satisfies \eqref{prior_stru}, it is shown in \citep{ZahmCuiLawEtAl18} that 
\begin{equation}\label{kl_error}
    \mathrm{D}_\mathrm{KL}(\pi |\pi _r^*)\leq\frac{\gamma}{2}\sum_{i=r+1}^d\lambda_i.
\end{equation}
Here the constant is $\gamma=\exp(\sup W(x)-\inf W(x))$. 

By the projection operator $P_r$, we can decompose the prior for the parameter $x=x^r+x^\perp$ (with $x^r := P_r x = \Psi_r w$) as
$$
p_0(x)=p_0^r(x^r)p_0^\perp(x^\perp|x^r).
$$
Because $p_0^r$ only depends on $x^r=\Psi_rw$, a prior density for $w$ can be defined as
$$
\tilde p_0(w)=p_0^r(\Psi_rw). 
$$
Based on this prior for $w$ and the optimal profile function $g=g^*$, the posterior for $w$ writes:
$$
\tilde \pi(w) = \frac{1}{\tilde Z}g(\Psi_rw)\tilde p_0(w), 
$$
where $\tilde Z=\int g(\Psi_r w)\tilde p_0(w)dw$ is a normalization constant. With $\tilde \pi(w)$, we can rewrite the projected density $\pi_r$ by
$$
\pi_r(x) = \tilde \pi(w)p_0^\perp(x^\perp|\Psi_rw).
$$
Hence, to sample $x$ from $\pi_r$, it suffices to sample $w$ from $\tilde \pi$ and $x^\perp$ from $p_0^\perp(x^\perp|\Psi_rw)$. 

To sample from $\tilde \pi$, we apply WGD for $w$ as
\begin{equation}\label{equ:pwgd}
    w^n_{l+1} = w^n_l+\alpha_l^n(\nabla_w \log \tilde \pi(w^n_l)-\xi^r(w^n_l)). 
\end{equation}
Here $\xi^r$ is an approximation of $\nabla \log \rho^{r}_l(w)$, where we assume that $\{w^n_l\}_{n=1}^N$ follow the distribution of $\rho^{r}_l$. Similarly, we can use KDE in calculating the approximation
$$
\xi^r(w)=\frac{\sum_{n=1}^N\nabla k^r(w,w^n_l)}{\sum_{n=1}^Nk^r(w,w^n_l)}.
$$
Here $k^r:\mbR^r\times \mbR^r\to \mbR$ is a positive kernel function. A common choice is a Gaussian kernel
$$
k^r(w,w')=\exp\pp{-\frac{\|w-w'\|_2^2}{2h}},
$$
where $h$ is computed at the current samples $w_l^n$, e.g., as the median of their square distances \citep{SVGD} or through optimization \citep{aigf}.
For the step size $\alpha_l^n$ in \eqref{equ:pwgd}, we use a line search technique \citep{ChenWuChenEtAl19, ChenGhattas20}.

In practice, to construct the basis $\Psi_r$, instead of computing $H$ in \eqref{equ:h} with samples from posterior, which are not available, we approximate it using particles at step $l$ by
\begin{equation}\label{eq:Hl}
    \hat H_l =\frac{1}{N} \sum_{n=1}^N \nabla \log f(x^n_l) (\nabla \log f(x^n_l))^T.
\end{equation}
Instead of computing the matrix $\hat H$ and the projection basis at every step, we compute them every $L$ steps to save computational cost. Besides, we approximate the optimal profile function $g^*(x)$ via $f(P_rx)$ since $f$ has negligible change in the complement space \cite{ZahmCuiLawEtAl18}.

The overall projected Wasserstein gradient descent (pWGD) algorithm is summarized in Algorithm \ref{alg:pwgd}, where we can set convergence criteria as small particle move and/or the largest number of iterations reached. Note that for convenience we fix $x_0^{n, \perp}$ (computed line 5) in the complement space of each reconstructed subspace with basis $\Psi_r$, since it is not informed by data. Thanks to the parallel structure of the updating the samples, we also develop a parallel implementation of this algorithm. 


\begin{algorithm}[!htp]
\caption{Projected Wasserstein gradient descent}
\label{alg:pwgd}
\begin{algorithmic}[1]
\REQUIRE initial particles $\{x_0^n\}_{n=1}^N$ drawn from the prior.
\STATE Set $l=0$.
\WHILE{not converged}
\IF {$l \equiv L \text{ (mod) }0$}
\STATE Construct basis $\Psi_r$ by \eqref{eq:genEignen} with $\hat{H}_l$ in \eqref{eq:Hl}. 
\STATE Compute $w_l^n = \Psi_r^T x_l^n$ and $x_0^{n, \perp} = x_l^n - \Psi_r w_l^n$.
\ENDIF
\STATE Update the WGD particle system by \eqref{equ:pwgd}. 
\STATE Set $x_{l+1}^n = \Psi_r w_{l+1}^n + x_0^{n, \perp}$ and $l = l + 1$.
\ENDWHILE
\RETURN $\{x_l^n\}_{n=1}^N$
\end{algorithmic}
\end{algorithm}

\subsection{Batched kernel density estimate}
For some problems, the dimension $r$ after projection can be relatively large for KDE to accurately approximate the density. Hence, we can further consider the hyper-projection operator $P^{(1)},P^{(2)},\dots,P^{(s)}:\mbR^r\to \mbR^r$ which satisfies $P^{(1)}+\dots+P^{(s)}=I_r$. For $j=1,\dots,s$, we denote
$$
\xi_{r,j}(w)=\frac{\sum_{n=1}^N\nabla k^{r,j}(P^{(j)}w,P^{(j)}w_n^l)}{\sum_{n=1}^Nk^{r,j}(P^{(j)}w,P^{(j)}w_n^l)}.
$$
Here $k^{r,j}$ is a Gaussian kernel whose bandwidth is calculated based on $\{P^{(j)}w_l^n\}_{n=1}^N$. In each iteration, we iteratively use KDE to update
$$
w^n_{l,j+1} = w^n_{l,j}+\alpha_l(P^{(j+1)}\nabla_w \log \tilde \pi(w^n_{l,j})-\xi^{r,j+1}(w^n_{l,j}))
$$
with $w^n_{l,0}=w^n_l$. Then, we update $w^n_{l+1}=w^n_{l,s}$. 

\subsection{Projected Wasserstein gradient flow in full space}
For fixed projection operator, pWGD in continuous time corresponds to the projected Wasserstein gradient flow:
$$
\p_t \tilde \rho_t = \nabla\cdot(\tilde \rho_t(\nabla \log \tilde \rho_t-\nabla \log \tilde \pi)),
$$
with $\tilde \rho_t|_{t=0}=\tilde p_0$. Here $\tilde \rho_t$ is the density of $w_t$. In continuous time, the particle dynamics of pWGD in terms of $w$ follows
$$
\frac{dw_t}{dt} = \nabla\log \tilde \pi(w_t)-\nabla \log \tilde \rho_t(w_t).
$$

To characterize the dynamics of pWGD in the full space, we consider the situation that $x_t = \Psi_r w_t+x^\perp_0$, where $x^\perp_0\sim p_0^\perp(\cdot|\Psi_r^T z)$. Here $z\in \mbR^d$ is a fixed vector. Under this assumption, $P_r x_t=\Psi_r w_t$ and $(I-P_r)x_t=x^\perp_0$ are independent. Denote $\rho_t$ as the density function of $x_t$. Then, we have
$$
\rho_t(x)=\tilde \rho_t(\Psi_r^T x) p_0^\perp(x-P_r x|\Psi_r^Tz). 
$$
We obtain the following Wasserstein gradient flow for $\rho_t$ in the full space, whose proof is provided in Appendix A.
\begin{proposition}
Suppose that $x_t = \Psi_r w_t+x^\perp_0$, where $x^\perp_0\sim p_0^\perp(\cdot|\Psi_r^T z)$ for a fixed $z\in \mbR^d$. Let $\rho_t$ be the density function of $x_t$. Then, the evolution of $\rho_t$ satisfies
\begin{equation}\label{pwgf:full}
    \p_t \rho_t = -\nabla\cdot(\rho_t(\nabla \log \hat \pi_r-\nabla \log \rho_t)).
\end{equation}
Here the density $\hat \pi_r(x)$ is defined as
$$
\hat \pi_r(x) = \tilde \pi(\Psi_r^Tx) p_0^\perp(x-P_rx|\Psi_r^Tz). 
$$
\end{proposition}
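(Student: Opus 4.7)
The plan is to recover the Wasserstein gradient flow for $\rho_t$ in the full space by combining three ingredients: the continuity equation, the velocity of $x_t$ induced by the subspace pWGD dynamics, and the common product structure of $\rho_t$ and $\hat\pi_r$.

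First I would differentiate the affine relation $x_t=\Psi_r w_t+x_0^\perp$ in time. Since $x_0^\perp$ is fixed along each trajectory and $w_t$ satisfies $\dot w_t=\nabla_w\log\tilde\pi(w_t)-\nabla_w\log\tilde\rho_t(w_t)$, the velocity field driving $x_t$ is
$$
v_t(x)=\Psi_r\bigl(\nabla_w\log\tilde\pi(\Psi_r^T x)-\nabla_w\log\tilde\rho_t(\Psi_r^T x)\bigr),
$$
which lies in the range of $\Psi_r$ and depends on $x$ only through $w=\Psi_r^T x$. Next I would justify the stated product form of $\rho_t$: because $w_0$ and $x_0^\perp$ are independent by construction and $w_t$ depends only on $w_0$ under the mean-field dynamics, $w_t$ and $x_0^\perp$ remain independent for all $t$. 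The map $(w,x^\perp)\mapsto \Psi_r w+x^\perp$ is a volume-preserving bijection with inverse $x\mapsto(\Psi_r^T x,\,x-P_r x)$, so the density of $x_t$ factors as $\rho_t(x)=\tilde\rho_t(\Psi_r^T x)\,p_0^\perp(x-P_r x\,|\,\Psi_r^T z)$, matching the ansatz.

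With these facts, the continuity equation $\partial_t\rho_t=-\nabla\cdot(\rho_t v_t)$ holds, and it remains to identify $v_t$ with $\nabla\log\hat\pi_r-\nabla\log\rho_t$. Taking logarithms of the product forms of $\hat\pi_r(x)$ and $\rho_t(x)$, the common factor $\log p_0^\perp(x-P_r x\,|\,\Psi_r^T z)$ appears in both $\log\hat\pi_r$ and $\log\rho_t$, so its $x$-gradient cancels in the subtraction. What remains is
$$
\nabla_x\bigl[\log\tilde\pi(\Psi_r^T x)-\log\tilde\rho_t(\Psi_r^T x)\bigr],
$$
which the chain rule, together with $\nabla_x(\Psi_r^T x)=\Psi_r$, converts into $\Psi_r(\nabla_w\log\tilde\pi-\nabla_w\log\tilde\rho_t)|_{w=\Psi_r^T x}$. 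This coincides with $v_t(x)$, completing \eqref{pwgf:full}.

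The only delicate point is precisely this cancellation of the orthogonal-component gradient: one must check that both $\hat\pi_r$ and $\rho_t$ carry exactly the same $p_0^\perp(\cdot\,|\,\Psi_r^T z)$ factor anchored at the same reference $z$, which is built into the definitions. Apart from this bookkeeping and the standard use of the continuity equation, no substantive obstacle arises; the proof is essentially chain-rule calculus on top of the independence of $w_t$ and $x_0^\perp$.
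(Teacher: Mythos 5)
Your proposal is correct and follows essentially the same route as the paper's proof: both arguments rest on the chain-rule identity $\nabla_x\bigl[\phi(\Psi_r^T x)\bigr]=\Psi_r\nabla_w\phi\big|_{w=\Psi_r^T x}$, the cancellation of the common $p_0^\perp(x-P_rx\,|\,\Psi_r^Tz)$ factor when forming $\nabla\log\hat\pi_r-\nabla\log\rho_t$, and the identification of this difference with the velocity field $\Psi_r\dot w_t$ before invoking the continuity equation. The only addition is your explicit justification of the product form of $\rho_t$ via the persistence of independence of $w_t$ and $x_0^\perp$, which the paper takes as part of the hypothesis.
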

\begin{proof}
We note that 
$$
\begin{aligned}
    \log \rho_t(x) = &\log \tilde \rho_t(\Psi_r^T x) + \log p_0^\perp(x-P_r x|\Psi_r^Tz)\\
    =&\log \tilde \rho_t(\Psi_r^T x) +\log p_0(x-P_r x+\Psi_r^Tz)-\log p_0^r(\Psi_r^Tz).
\end{aligned}
$$

Therefore, we have
$$
\begin{aligned}
    &\nabla_x \log \rho_t(x) \\
    =&\nabla_x(\log \tilde \rho_t(\Psi_r^T x))+\nabla_x(\log p_0(x-P_r x+\Psi_r^Tz))\\
    =&\Psi_r \nabla_w \log \tilde \rho_t(\Psi_r^T x)+(I-P_r)\nabla_x \log p_0(x-P_r x+\Psi_r^Tz).
\end{aligned}
$$
We also note that
$$
\begin{aligned}
\log \hat \pi_r (x) =& \log \tilde \pi(\Psi_r^T x)+ \log p_0^\perp(x-P_rx|\Psi_r^Tz)\\
=&\log \tilde \pi(\Psi_r^T x)+\log p_0(x-P_r x+\Psi_r^Tz)-\log p_0^r(\Psi_r^Tz).
\end{aligned}
$$
Similarly, this yields
$$
\begin{aligned}
&\nabla_x\log \hat \pi_r(x) \\
= &\Psi_r\nabla_w \log \tilde \pi(\Psi_r^T x)+(I-P_r)\nabla \log p_0(x-P_r x+\Psi_r^Tz).
\end{aligned}
$$
Based on these observations, we have the identity:
$$
\begin{aligned}
    &\nabla_x\log \hat \pi_r(x)-\nabla_x \log \rho_t(x)\\
=&\Psi_r\nabla_w \log \tilde \pi(\Psi_r^T x)-\Psi_r \nabla_w \log \tilde \rho_t(\Psi_r x).
\end{aligned}
$$
In summary, we have
$$
\begin{aligned}
\frac{dx_t}{dt} = &\Psi_r\frac{d w_t}{dt}\\
=&\Psi_r\pp{\nabla_w\log \tilde \pi(\Psi_r^T x_t)-\nabla_w \log \tilde \rho_t(\Psi_r^T x_t)}\\
=&\nabla_x\log \pi_r(x_t)-\nabla_x \log \rho_t(x_t).
\end{aligned}
$$
Hence, as the density function of $x$, $\rho_t$ satisfies 
$$
\p_t \rho_t = -\nabla\cdot(\rho_t(\nabla \log \hat \pi_r-\nabla \log \rho_t)).
$$

\end{proof}


\subsection{Convergence analysis in continuous time}
Suppose that $-\log \pi$ is  $\mu$ strongly convex, where $\mu>0$. From the classic analysis \citep{otoan} [Theorem 24.7], the Wasserstein gradient flow \eqref{equ:wgf} has the exponential convergence rate:
$$
\mathrm{D}_{\mathrm{KL}}( \rho_t| \pi) = O(\exp(- \mu t)).
$$
For fixed projection operator, pWGD in continuous time corresponds to the projected Wasserstein gradient flow:
$$
\p_t \tilde \rho_t = \nabla\cdot(\tilde \rho_t(\nabla \log \tilde \rho_t-\nabla \log \tilde \pi)),
$$
with $\tilde \rho_t|_{t=0}=\tilde p_0$. Assume that $-\log \tilde \pi$ is $\tilde \mu$ strong convex, where $\tilde \mu>0$. Similarly, the projected Wasserstein gradient flow has the following exponential convergence rate:
$$
\mathrm{D}_{\mathrm{KL}}(\tilde \rho_t|\tilde \pi) = O(\exp(-\tilde \mu t)).
$$
Under suitable assumptions we can show that the convergence of projected Wasserstein gradient flow is at least as fast as the convergence of Wasserstein gradient flow in the full space. The proof is provided in Appendix B. 
\begin{proposition}\label{prop:conv}
Suppose that the prior $p_0$ follows an isotropic Gaussian distribution. It satisfies that $-\nabla^2 \log p_0(x)= \xi I$. Assume that $\sigma_\text{min}\pp{-\frac{\nabla^2 f(x)}{f(x)}}\geq \mu-\xi$. Then, we have $\tilde \mu\geq \mu$.
\end{proposition}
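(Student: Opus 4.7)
The plan is to compute $-\nabla_w^2\log\tilde\pi(w)$, split it into a contribution from the reduced prior $\tilde p_0$ and one from the optimal profile $g^*$, and apply the pointwise hypothesis on $-\nabla^2 f/f$ after expressing the likelihood Hessian as an expectation.

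First, I would exploit isotropy of $p_0$ to simplify. Since $p_0\propto\exp(-\xi\|x\|^2/2)$ factors along the orthogonal decomposition $x=\Psi_r w+z$ with $\Psi_r^T\Psi_r=I_r$ (the generalized eigenproblem with $\Gamma=\xi I$ reduces to a standard eigenproblem, so the $\psi_i$ can be chosen Euclidean-orthonormal), the conditional $p_0^\perp(z\mid\Psi_r w)\propto\exp(-\xi\|z\|^2/2)$ does not depend on $w$, and $\tilde p_0(w)=p_0^r(\Psi_r w)\propto\exp(-\xi\|w\|^2/2)$. The reduced prior therefore contributes exactly $\xi I_r$ to $-\nabla_w^2\log\tilde\pi$. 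The remaining contribution comes from $g^*(\Psi_r w)=F(\Psi_r w)$, where
\begin{equation*}
F(u):=\int f(u+z)\,p_0^\perp(z)\,dz,
\end{equation*}
and by the chain rule $-\nabla_w^2\log F(\Psi_r w)=\Psi_r^T\bigl(-\nabla_u^2\log F(u)\bigr)\Psi_r$ at $u=\Psi_r w$.

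Second, I would lower-bound $-\nabla_u^2\log F(u)$ via
\begin{equation*}
-\nabla_u^2\log F(u)=-\frac{\nabla_u^2 F(u)}{F(u)}+\frac{\nabla_u F(u)\,\nabla_u F(u)^T}{F(u)^2}.
\end{equation*}
The outer-product term is positive semi-definite, and the first term, after interchanging differentiation and integration, can be written as an expectation against a probability density,
\begin{equation*}
-\frac{\nabla_u^2 F(u)}{F(u)}=\mathbb{E}_{z\sim q_u}\!\left[-\frac{\nabla^2 f(u+z)}{f(u+z)}\right],\qquad q_u(z):=\frac{f(u+z)\,p_0^\perp(z)}{F(u)}.
\end{equation*}
Since by hypothesis $-\nabla^2 f(x)/f(x)\succeq(\mu-\xi)I_d$ pointwise, the expectation inherits the same PSD bound. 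Sandwiching by $\Psi_r^T(\cdot)\Psi_r$ and using $\Psi_r^T\Psi_r=I_r$ gives $-\nabla_w^2\log g^*(\Psi_r w)\succeq(\mu-\xi)I_r$. Combining with the prior contribution yields $-\nabla_w^2\log\tilde\pi(w)\succeq\xi I_r+(\mu-\xi)I_r=\mu I_r$, hence $\tilde\mu\ge\mu$.

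The main subtleties I expect are two. First, the singular-value assumption must be read as a PSD bound $-\nabla^2 f/f\succeq(\mu-\xi)I_d$, consistent with the way $\mu$-strong log-concavity of $-\log\pi$ is used in the full-space convergence statement; and second, mild integrability of $f$ and its first two derivatives against the Gaussian $p_0^\perp$ is needed to legitimize the differentiation under the integral in the definition of $F$. The isotropy hypothesis is doing real work: without it, $p_0^\perp(\cdot\mid\Psi_r w)$ would depend on $w$, introducing a cross-term that does not cleanly absorb into $\xi I_r$, and the clean inequality $\tilde\mu\ge\mu$ would require additional structure on the prior.
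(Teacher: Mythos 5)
Your proposal is correct and follows essentially the same route as the paper's proof: split $-\nabla_w^2\log\tilde\pi$ into the reduced-prior contribution $\xi I_r$ and the profile-function contribution, drop the positive semi-definite gradient outer-product term in $-\nabla^2\log g = -\nabla^2 g/g + \nabla g\,\nabla g^T/g^2$, and bound $-\nabla^2 g/g$ by passing the pointwise hypothesis on $-\nabla^2 f/f$ through the integral defining $g$ (your reweighted expectation against $q_u$ is exactly the paper's inequality $-v^T\Psi_r^T\nabla^2 g\,\Psi_r v \geq (\mu-\xi)\|v\|_2^2\, g$). Your added remarks on reading the $\sigma_{\min}$ hypothesis as a PSD bound, on differentiation under the integral, and on why isotropy is needed so that $p_0^\perp$ does not depend on $w$ are sensible clarifications but do not change the argument.
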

\begin{proof}
First, we note that
$$
\log \pi = \log f+\log p_0-\log Z. 
$$
Because $-\log \pi$ is  $\mu$ strongly convex, this suggests that for all $x\in \mbR^d$,
$$
\sigma_\text{min}(-\nabla^2 \log f(x) -\nabla^2 \log p_0(x))\geq \mu,
$$
where $\sigma_\text{min}(A)$ denotes the smallest eigenvalue of a symmetric matrix $A$. We note that
$$
-\nabla^2 \log f(x) = -\frac{\nabla^2 f(x)}{f(x)}+\frac{\nabla f(x)(\nabla f(x))^T}{f(x)^2}. 
$$
Because $\frac{\nabla f(x)(\nabla f(x))^T}{f(x)^2}$ is a rank-1 matrix, we can assume that $\sigma_\text{min}\pp{-\frac{\nabla^2 f(x)}{f(x)}}\geq \mu-\xi$. 
On the other hand, we note that 
$$
\log \tilde \pi(w) = \log g(\Psi_r w)+\log p_0^r(\Psi_rw)-\log \tilde Z.
$$
Thus, $\tilde \mu$ shall satisfy that for all $w\in \mbR^r$,
$$
\sigma_\text{min}(-\Psi_r^T(\nabla^2 \log g(\Psi_r w) +\nabla^2 \log p_0^r(\Psi_r w))\Psi_r )\geq \tilde \mu .
$$
We note that
$$
\begin{aligned}
&-\nabla^2 \log g(\Psi_r w)\\
= &-\frac{\nabla^2 g(\Psi_r w)}{g(\Psi_r w)} +\frac{\nabla  g(\Psi_r w)(\nabla  g(\Psi_r w))^T}{g(\Psi_r w)^2}.
\end{aligned}
$$
For $v\in \mbR^r$, we have
$$
\begin{aligned}
&-v^T\Psi_r^T \nabla^2 \log g(\Psi_r w) \Psi_r v\\
=&-\frac{v^T\Psi_r^T  \nabla^2 g(\Psi_r w) \Psi_r v}{g(\Psi_r w)}+\frac{|v^T\Psi_r^T\nabla  g(\Psi_r w)|^2 }{g(\Psi_r w)^2}\\
\geq &-\frac{v^T\Psi_r^T  \nabla^2 g(\Psi_r w) \Psi_r v}{g(\Psi_r w)}
\end{aligned}
$$
Recall that
$$
\nabla^2 g(\Psi_r w) = \int_{z\in X^\perp} \nabla^2 f(\Psi_r w+z)p_0^\perp(z)dz
$$
This indicate that
$$
\begin{aligned}
&-v^T\Psi_r^T  \nabla^2 g(\Psi_r w) \Psi_r v \\
=&-\int_{z\in X^\perp} v^T\Psi_r^T\nabla^2 f(\Psi_r w+z)\Psi_r vp_0^\perp(z)dz\\
\geq &\int_{z\in X^\perp} (\mu-\xi)f(\Psi_r w+z)\|\Psi_r v\|_2^2p_0^\perp(z)dz\\
=& (\mu-\xi)\|v\|_2^2 g(\Psi_r).
\end{aligned}
$$
This indicates that $\sigma_\text{min}(\Psi_r^T\nabla^2 \log g(\Psi_r w)\Psi_r)\geq \mu-\xi$. Besides, because $-\nabla^2 \log p_0(x)= \xi I$, we also have $-\nabla \log p_0^r (\Psi_r w)=\xi I$. Hence, we have
$$
\sigma_\text{min}(-\Psi_r^T(\nabla^2 \log g(\Psi_r w) +\nabla^2 \log p_0^r(\Psi_r w))\Psi_r )\geq \mu.
$$
Thus, we have $\tilde \mu \geq \mu$.
\end{proof}

\subsection{Approximation error of optimal profile function}
Suppose that information matrix $H$ defined in \eqref{equ:h} has a sharp eigenvalue decay. This can make $r$ significantly smaller than $d$ and this also yields that for $z\in X^\perp$ with $\|z\|_{\Gamma}=1$,
\begin{equation}\label{equ:exp}
    \mbE_{x\sim \pi} |\nabla \log f(x)^Tz|^2=z^THz\leq \lambda_{r+1}.
\end{equation}
Hence, we can assume that the following statement holds: 
\begin{assumption}\label{asp:proj}
Given the projection operator $P_r$, for all $w\in \mbR^r$ and $z_1,z_2\in X^\perp$, there exists $\epsilon_1>0$ such that
$$
|\nabla \log f(\Psi_r w+z_2)^Tz_1|\leq \epsilon_1\|z_1\|_{\Gamma},
$$
\end{assumption} 
Based on this assumption, we have the following estimation for the difference between the optimal profile function $g(\Psi_r w)$ and $f(\Psi_r w)$, whose proof is provided in Appendix C. 
\begin{proposition}
Under Assumption \eqref{asp:proj}, denote $0<\delta_2<1<\delta_1$ as
$$
\delta_1 = \sup_{w\in \mbR^r}\int_{z\in X^\perp} \exp\pp{\frac{\epsilon_1}{2}\|z\|_{\Gamma}} p_0^\perp(z|\Psi_r w)dz,
$$
$$
\delta_2 = \inf_{w\in \mbR^r}\int_{z\in X^\perp} \exp\pp{-\frac{\epsilon_1}{2}\|z\|_{\Gamma}} p_0^\perp(z|\Psi_r w)dz.
$$
Then, for all $w\in \mbR^r$, 
$$
\delta_2 \leq\frac{g(\Psi_r w)}{f(\Psi_r w)}\leq \delta_1.
$$
\end{proposition}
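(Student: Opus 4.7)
The plan is to express the ratio $g(\Psi_r w)/f(\Psi_r w)$ as a weighted integral over the complement subspace $X^\perp$ and to control the integrand pointwise using the gradient estimate provided by Assumption \ref{asp:proj}.

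Starting from the definition $g(\Psi_r w) = \int_{X^\perp} f(\Psi_r w + z)\, p_0^\perp(z|\Psi_r w)\, dz$, I would factor out $f(\Psi_r w)$ from the integrand to obtain
$$
\frac{g(\Psi_r w)}{f(\Psi_r w)} = \int_{X^\perp} \exp\!\bigl(\log f(\Psi_r w + z) - \log f(\Psi_r w)\bigr)\, p_0^\perp(z|\Psi_r w)\, dz.
$$
The key technical task is then a two-sided estimate of the exponent. By the fundamental theorem of calculus applied to the segment joining $\Psi_r w$ to $\Psi_r w + z$,
$$
\log f(\Psi_r w + z) - \log f(\Psi_r w) = \int_0^1 \nabla \log f(\Psi_r w + tz)^T z\, dt.
$$
Since $tz \in X^\perp$ for every $t \in [0,1]$, Assumption \ref{asp:proj} applies with $z_1 = z$ and $z_2 = tz$, yielding the uniform bound $|\nabla \log f(\Psi_r w + tz)^T z| \leq \epsilon_1 \|z\|_\Gamma$. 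After the refinement discussed below, this gives the two-sided estimate
$$
\exp\!\bigl(-\tfrac{\epsilon_1}{2}\|z\|_\Gamma\bigr) \;\leq\; \frac{f(\Psi_r w + z)}{f(\Psi_r w)} \;\leq\; \exp\!\bigl(\tfrac{\epsilon_1}{2}\|z\|_\Gamma\bigr).
$$
Integrating both sides against the conditional density $p_0^\perp(\cdot|\Psi_r w)$ and then passing to the infimum (resp.\ supremum) over $w \in \mathbb{R}^r$ reproduces exactly the definitions of $\delta_2$ and $\delta_1$, delivering the sandwich $\delta_2 \leq g(\Psi_r w)/f(\Psi_r w) \leq \delta_1$. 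The strict inequalities $0 < \delta_2 < 1 < \delta_1$ then follow from Jensen's inequality applied to $\exp(\pm\tfrac{\epsilon_1}{2}\|z\|_\Gamma)$ and the fact that $\int p_0^\perp(z|\Psi_r w)\,dz = 1$.

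The main obstacle is recovering the factor $\tfrac{1}{2}$ in the exponent cleanly: the direct path integral from $0$ to $1$ naively produces only the constant $\epsilon_1\|z\|_\Gamma$, so reaching $\tfrac{\epsilon_1}{2}\|z\|_\Gamma$ requires either a midpoint-symmetric reparameterization of the segment (shifting the base point to $\Psi_r w + z/2$ and integrating over $[-\tfrac{1}{2}, \tfrac{1}{2}]$) or an application of Assumption \ref{asp:proj} to each half of the segment separately and a careful recombination. Apart from this mild tightening, the remainder of the argument consists of elementary manipulations of exponentials and an application of Fubini's theorem to interchange the $t$- and $z$-integrations.
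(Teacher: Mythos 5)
Your overall strategy is the same as the paper's: write $g(\Psi_r w)/f(\Psi_r w)=\int_{X^\perp}\exp\bigl(\log f(\Psi_r w+z)-\log f(\Psi_r w)\bigr)\,p_0^\perp(z|\Psi_r w)\,dz$, bound the exponent along the segment from $\Psi_r w$ to $\Psi_r w+z$ using Assumption \ref{asp:proj}, and integrate. The gap is exactly where you flag it, and your proposed repair does not close it. The fundamental theorem of calculus gives
$$
\log f(\Psi_r w+z)-\log f(\Psi_r w)=\int_0^1 z^T\nabla\log f(\Psi_r w+sz)\,ds ,
$$
and since Assumption \ref{asp:proj} bounds $|z^T\nabla\log f(\Psi_r w+sz)|$ by $\epsilon_1\|z\|_\Gamma$ \emph{uniformly in} $s$, the honest conclusion is $|\log f(\Psi_r w+z)-\log f(\Psi_r w)|\le\epsilon_1\|z\|_\Gamma$, with no factor $\tfrac12$. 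Neither of your suggested refinements recovers it: re-basing at the midpoint $\Psi_r w+z/2$ splits the increment into two pieces each controlled by $\epsilon_1\|z/2\|_\Gamma=\tfrac{\epsilon_1}{2}\|z\|_\Gamma$, which recombine to $\epsilon_1\|z\|_\Gamma$; the same accounting defeats any subdivision of the segment, because the assumption is a uniform pointwise bound on the directional derivative and the total arc length is fixed. So as written your two-sided estimate with $\tfrac{\epsilon_1}{2}$ is not established, and the final sandwich against the stated $\delta_1,\delta_2$ does not follow.

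For comparison, the paper's own proof obtains the $\tfrac12$ by writing the increment as $\int_0^1 s\,z^T\nabla\log f(\Psi_r w+sz)\,ds$ and using $\int_0^1 s\,ds=\tfrac12$; the extra factor of $s$ in that identity is spurious, so the paper's derivation of the constant is itself erroneous, and your instinct that the direct path integral ``naively produces only $\epsilon_1\|z\|_\Gamma$'' is the correct one. The clean fix is not a cleverer parameterization but a restatement: either define $\delta_1,\delta_2$ with $\exp(\pm\epsilon_1\|z\|_\Gamma)$ in place of $\exp(\pm\tfrac{\epsilon_1}{2}\|z\|_\Gamma)$, or strengthen Assumption \ref{asp:proj} to a bound by $\tfrac{\epsilon_1}{2}\|z_1\|_\Gamma$. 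With either adjustment your argument (FTC, uniform bound, exponentiate, integrate against $p_0^\perp(\cdot|\Psi_r w)$, take $\sup/\inf$ over $w$) is complete and correct; note also that Fubini is not actually needed, since you only use a pointwise bound on the inner $s$-integral rather than an interchange of integrals.
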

\begin{proof}
Under Assumption 1, for all $w\in \mbR^r$ and $z\in X^\perp$, we have:
$$
\begin{aligned}
&|\log f(\Psi_r w+z)-\log f(\Psi_r w)| \\
=& \left|\int_0^{1} sz^T\nabla \log f(\Psi_r w+sz)ds\right|\\
\leq & \int_0^{1} s |z^T\nabla \log f(\Psi_r w+sz)|ds\\
\leq & \epsilon_1\int_0^1 s\|z\|_{\Gamma}=\frac{\epsilon_1}{2}\|z\|_{\Gamma}.
\end{aligned}
$$
This indicates that
$$
\begin{aligned}
    &g(\Psi_r w)/f(\Psi_r w)\\
=&\int_{z\in X^\perp} \frac{f(\Psi_r w+z)}{f(\Psi_r w)}p_0^\perp(z|\Psi_r w)dz\\
=&\int_{z\in X^\perp} \exp(\log f(\Psi_r w+z)-\log f(\Psi_r w) )p_0^\perp(z|\Psi_r w)dz\\
\leq&\int_{z\in X^\perp} \exp\pp{\frac{\epsilon_1}{2}\|z\|_{\Gamma}} p_0^\perp(z|\Psi_r w)dz\\
\leq &\delta_1,
\end{aligned}
$$
where $\delta_1>0$ is a constant. Besides, we also have
$$
\begin{aligned}
    &g(\Psi_r w)/f(\Psi_r w)\\
\geq&\int_{z\in X^\perp} \exp\pp{-\frac{\epsilon_1}{2}\|z\|_{\Gamma}} p_0^\perp(z|\Psi_r w)dz\\
\geq &\delta_2,
\end{aligned}
$$
where $\delta_2\in (0,1)$ is a constant.
\end{proof}
From \eqref{equ:exp}, $\epsilon_1$ in Assumption 1 can be close to zero. We also note that for small $\epsilon_1$, the constants $\delta_1,\delta_2$ are close to $1$. 

\section{Numerical experiments}\label{sec:num}
In this section, we present a variety of numerical experiments to demonstrate the accuracy, convergence, and scalability of pWGD compared to WGD, SVGD, and pSVGD. The code for all the results is available at \href{https://github.com/cpempire/pWGD}{https://github.com/cpempire/pWGD}.

\subsection{Toy examples}
We first present two toy examples. The first example is a bi-modal posterior distribution with a Gaussian prior. WGD-MED and WGD-BM denote WGD with kernel bandwidth calculated by the MED method \citep{SVGD} and the BM method \citep{aigf} respectively. We compare WGD-MED, WGD-BM with SVGD. The results are presented in Figure \ref{fig:toy2d}. We note that WGD converges much faster than SVGD. Besides, WGD-BM captures the variance of the posterior distribution better than WGD-MED. 

\begin{figure}[htb]
\centering
\begin{minipage}[t]{0.9\textwidth}
\centering
\includegraphics[width=\linewidth]{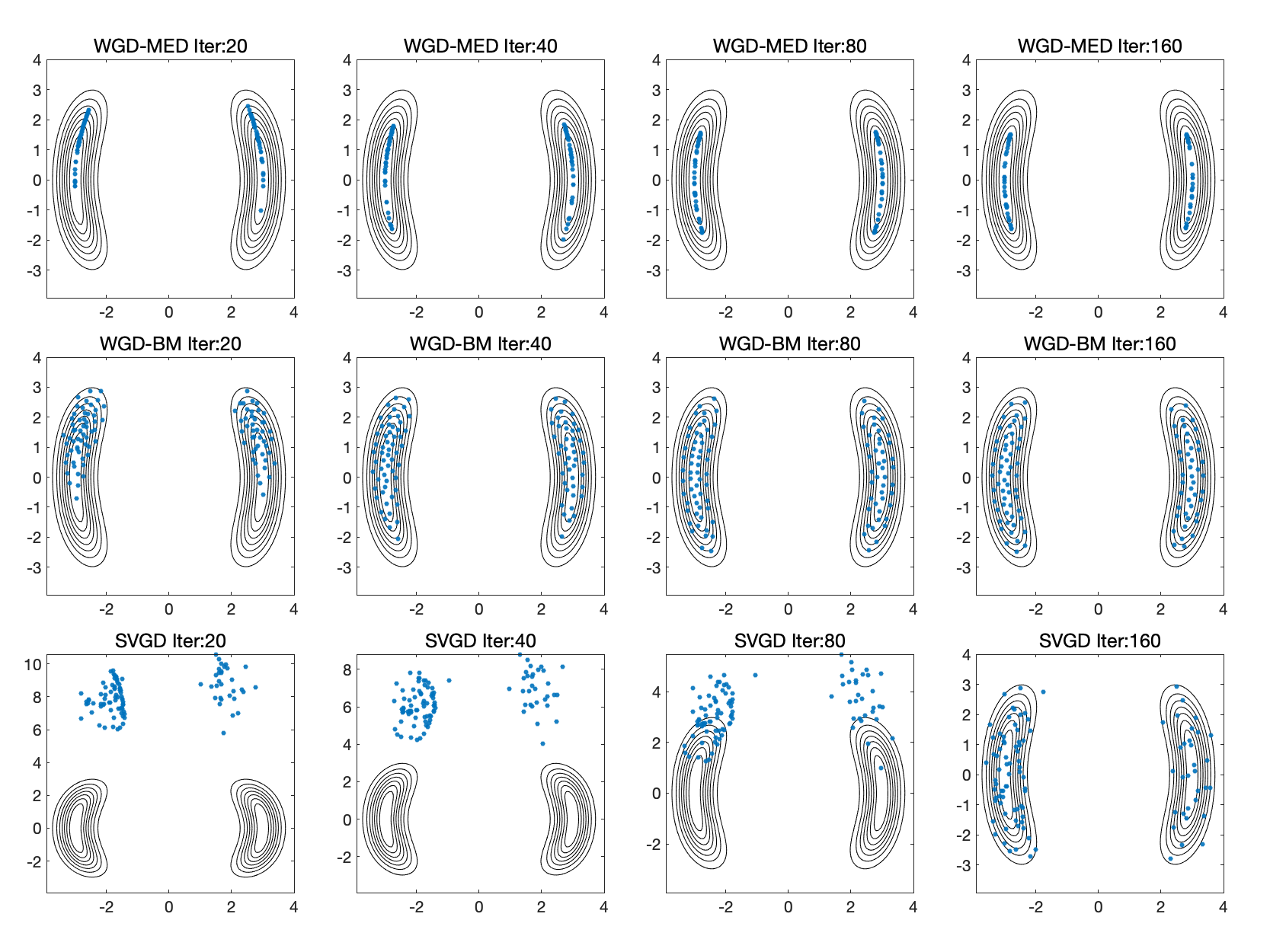}
\end{minipage}
\caption{Comparison of WGD-MED, WGD-BM and SVGD on a toy example for their convergence and accuracy.} \label{fig:toy2d}
\end{figure}

Another example is a double-banana-shaped posterior distribution with a Gaussian prior. The results are presented in Figure \ref{fig:toy_db}. We note that WGD is still faster than SVGD. WGD-MED and WGD-BM have similar performance.

\begin{figure}[htb]
\centering
\begin{minipage}[t]{0.9\textwidth}
\centering
\includegraphics[width=\linewidth]{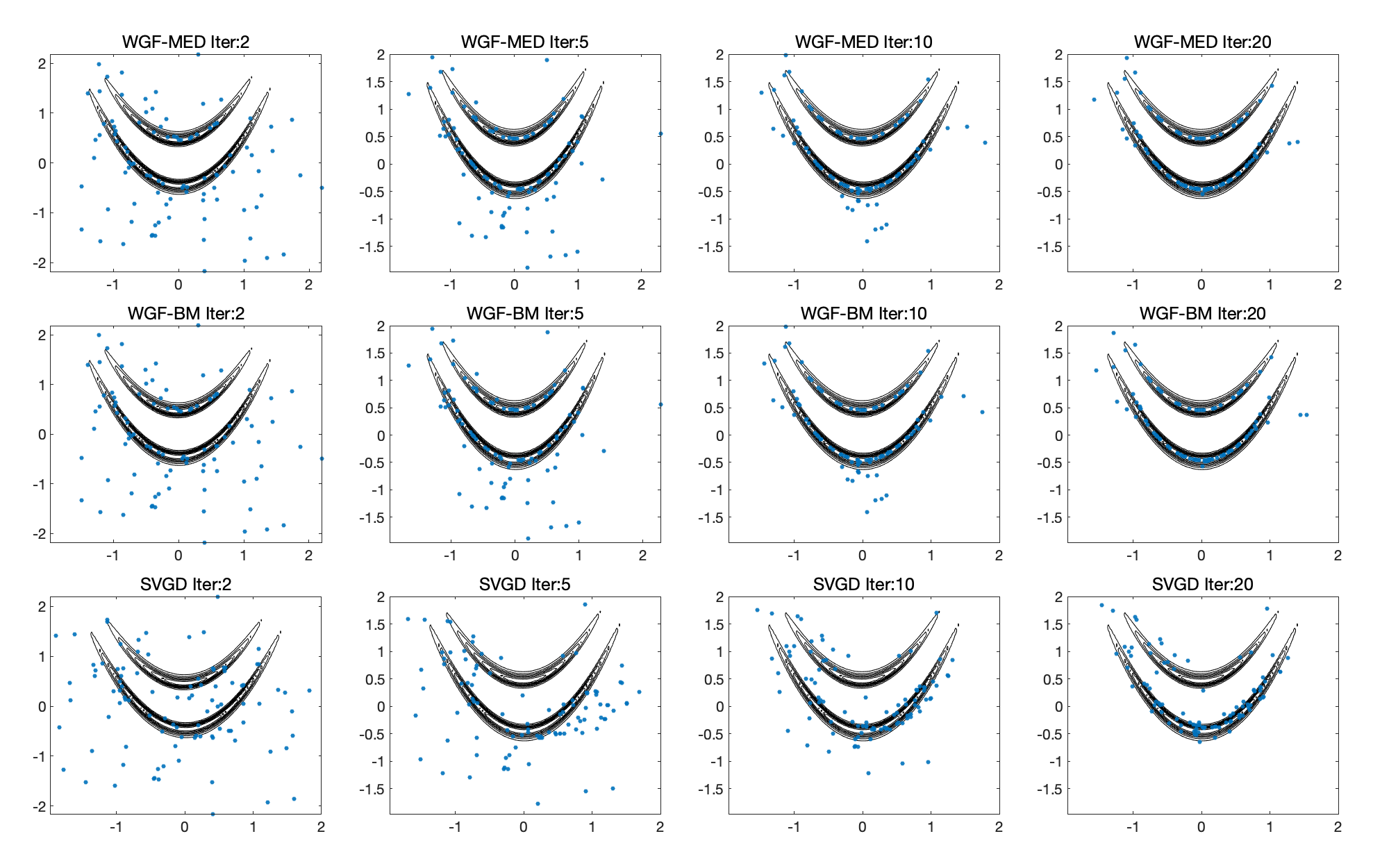}
\end{minipage}
\caption{Comparison of WGD-MED, WGD-BM and SVGD on a double-banana example for their convergence and accuracy.} \label{fig:toy_db}
\end{figure}

\subsection{Linear Bayesian inference}
We consider the following equation as a model for contaminant diffusion in environmental engineering
$$
- \kappa \Delta u + \nu u = x \quad \text{ in } D,
$$
where $D$ is a physical domain, $x$ is an infinite-dimensional contaminant source field parameter to be inferred, $u$ is the contaminant concentration which we can observe at some locations, $\kappa$ and $\nu$ are diffusion and reaction coefficients. Thanks to the linearity of the parameter $x$ to the observable $u$, we have a linear Bayesian inference problem. Under the assumption of Gaussian prior for $x \sim \mathcal{N}(x_0, C)$ and Gaussian observation noise, we have a Gaussian posterior whose mean and covariance can be explicitly given. For simplicity, we set $\kappa, \nu = 1$, $D = (0, 1)$, $u(0)=u(1)=0$, and consider 15 pointwise observations of $u$ with $1\%$ noise, equidistantly distributed in $D$. We set $x_0 = 0$ and use a covariance given by differential operator $C=(-\delta\Delta + \gamma I)^{-\alpha}$ with $\delta, \gamma, \alpha > 0$ representing the correlation length and variance, which is commonly used in geoscience \citep{LindgrenRueLindstroem11}. We set $\delta = 0.1, \gamma = 1, \alpha = 1$. We solve this forward model by a finite element method with piece-wise elements on a uniform mesh of size $2^k$ where $k=4,6,8$, leading to dimension $17, 65, 257$ for the discrete $x$.   

\begin{figure}[!htbp]
\centering
\begin{minipage}[t]{0.45\textwidth}
\centering
\includegraphics[width=\linewidth]{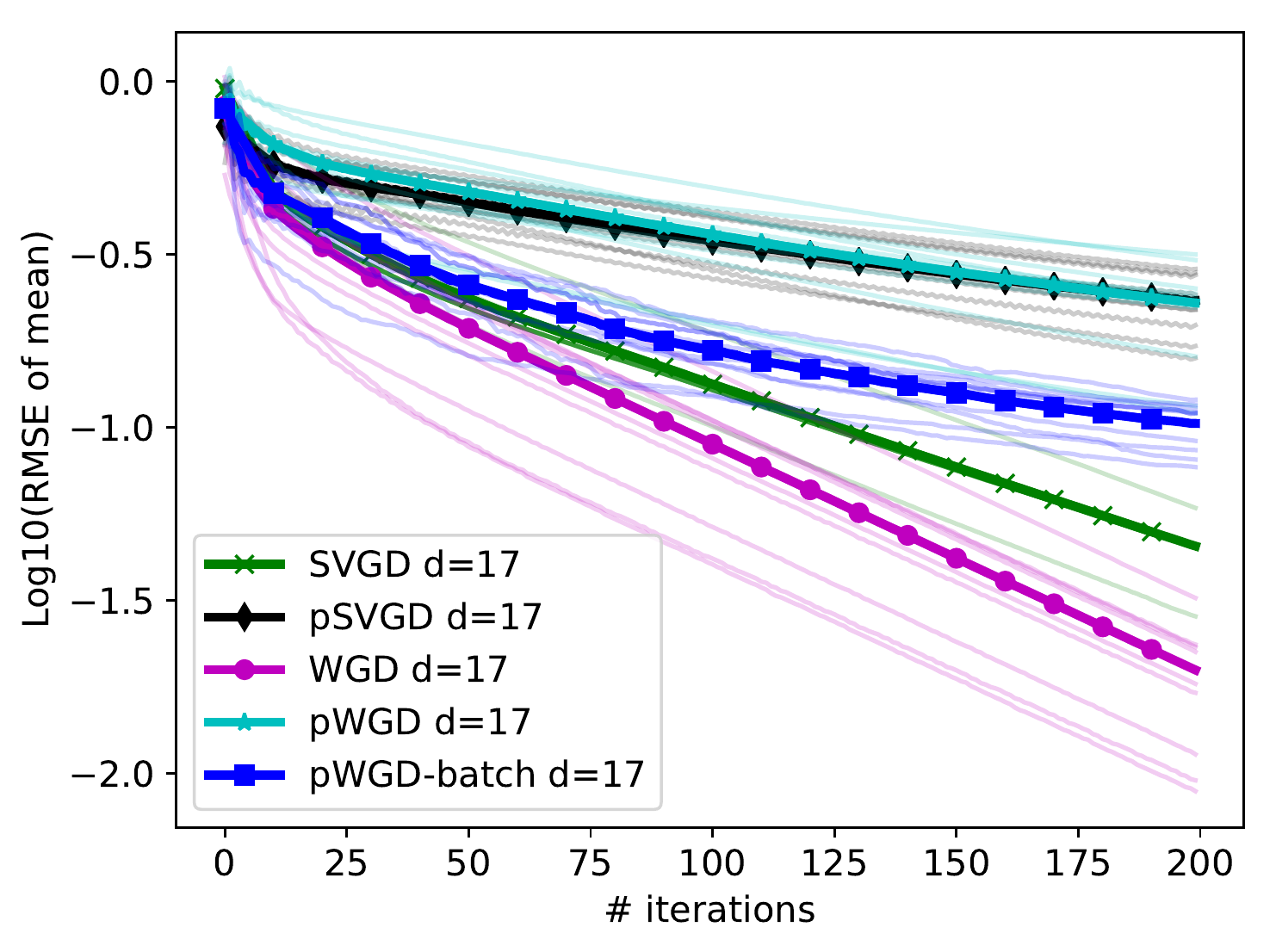}
\end{minipage}
\begin{minipage}[t]{0.45\textwidth}
\centering
\includegraphics[width=\linewidth]{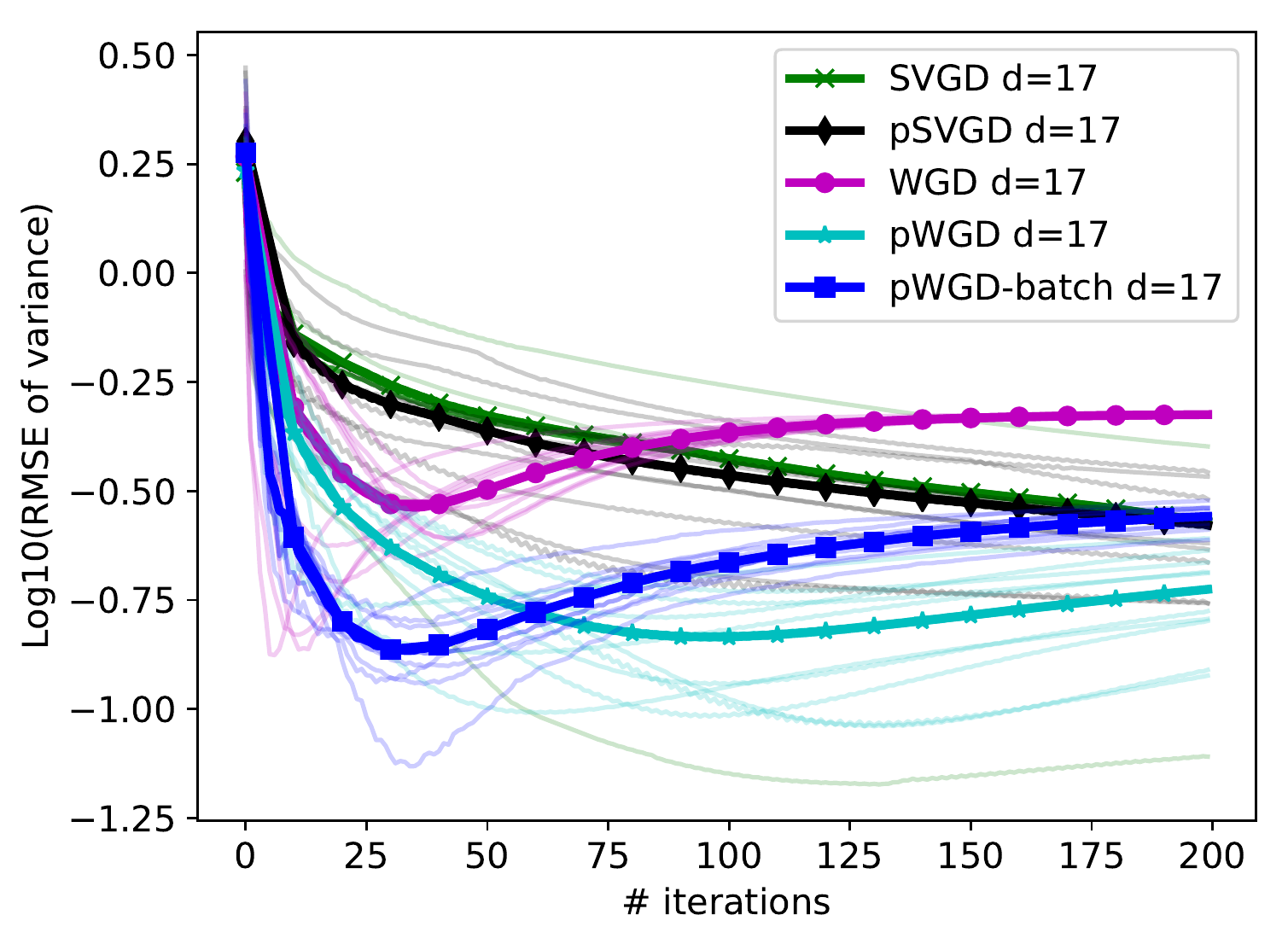}
\end{minipage}
\begin{minipage}[t]{0.45\textwidth}
\centering
\includegraphics[width=\linewidth]{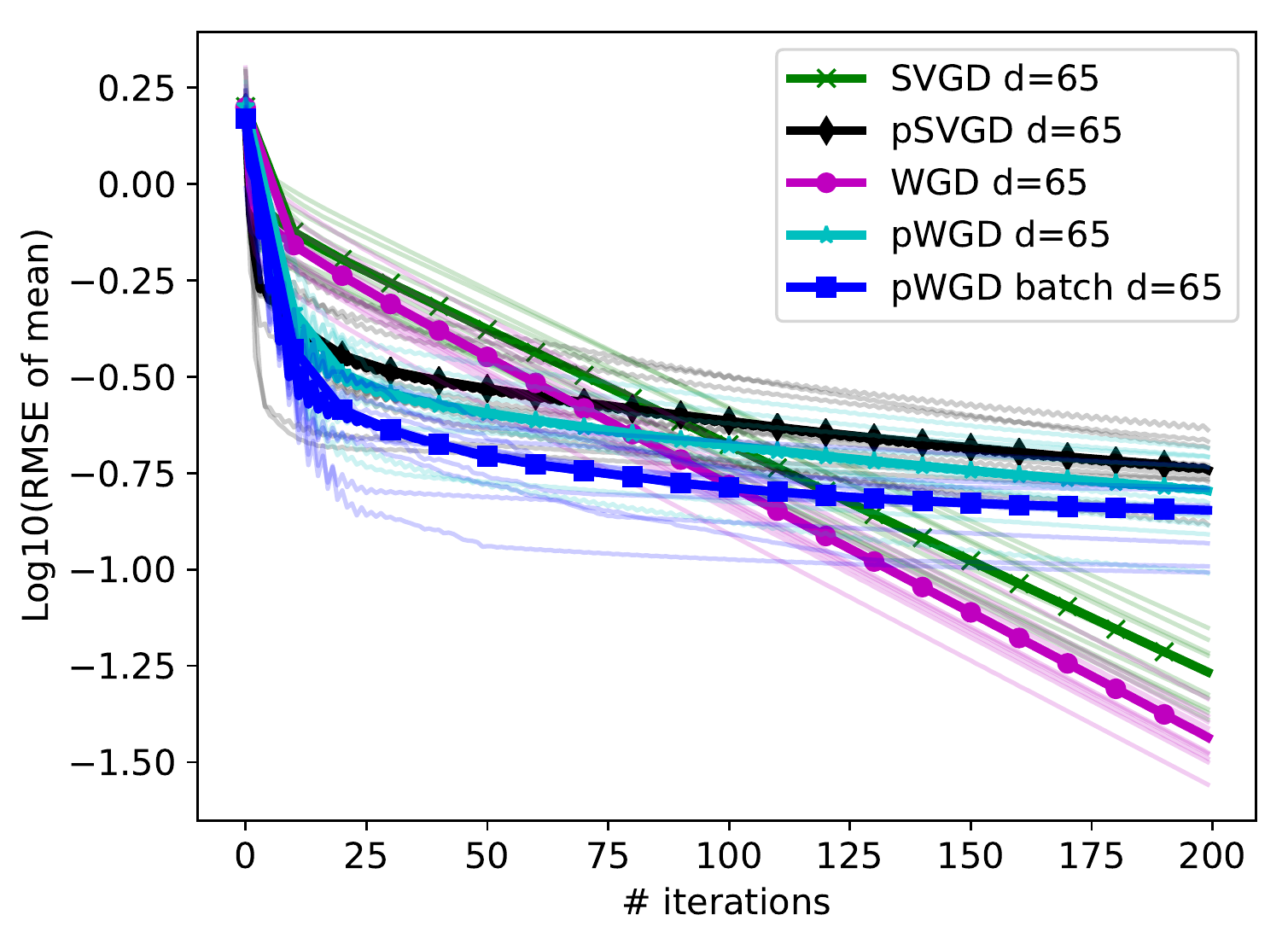}
\end{minipage}
\begin{minipage}[t]{0.45\textwidth}
\centering
\includegraphics[width=\linewidth]{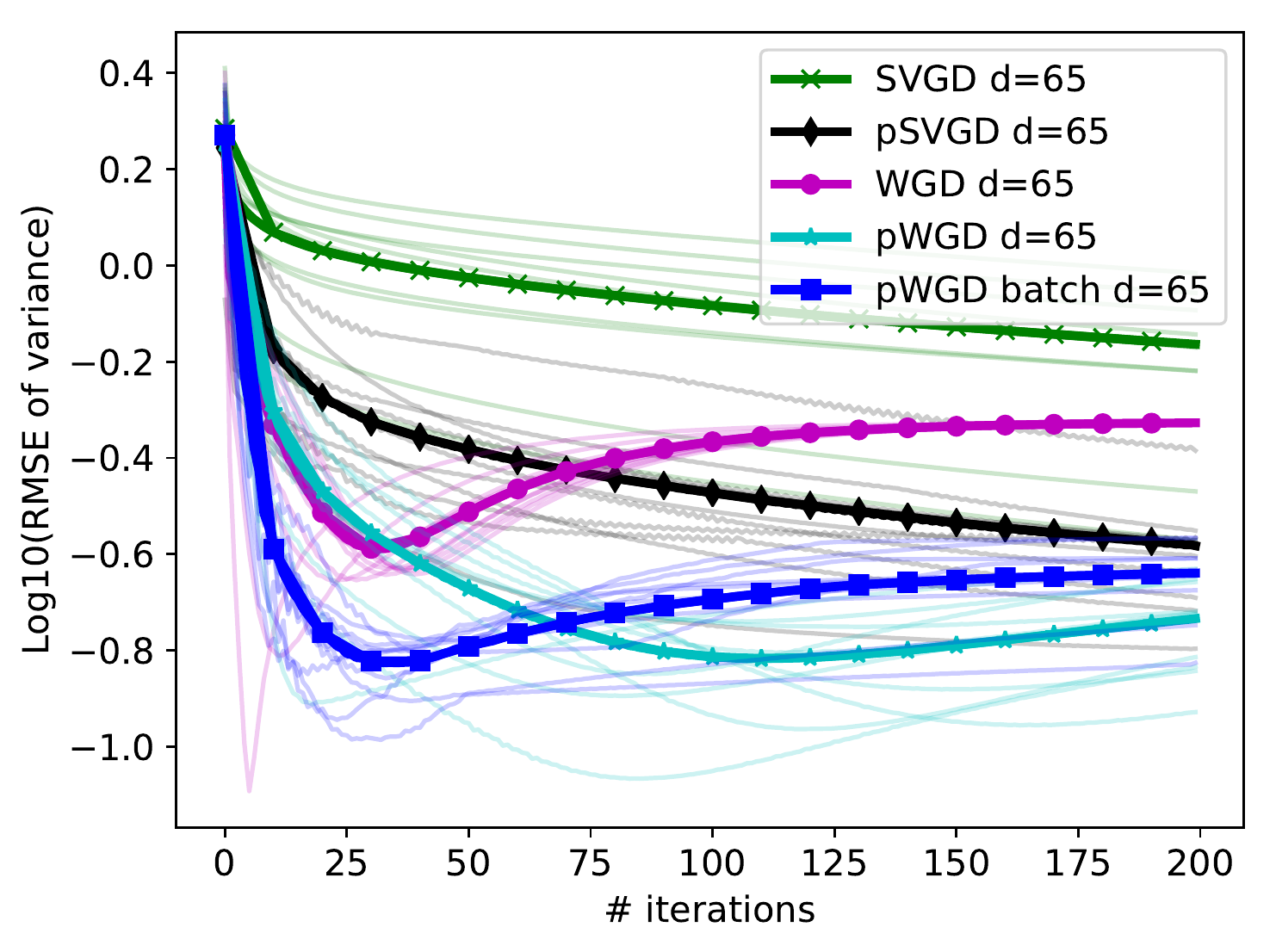}
\end{minipage}
\begin{minipage}[t]{0.45\textwidth}
\centering
\includegraphics[width=\linewidth]{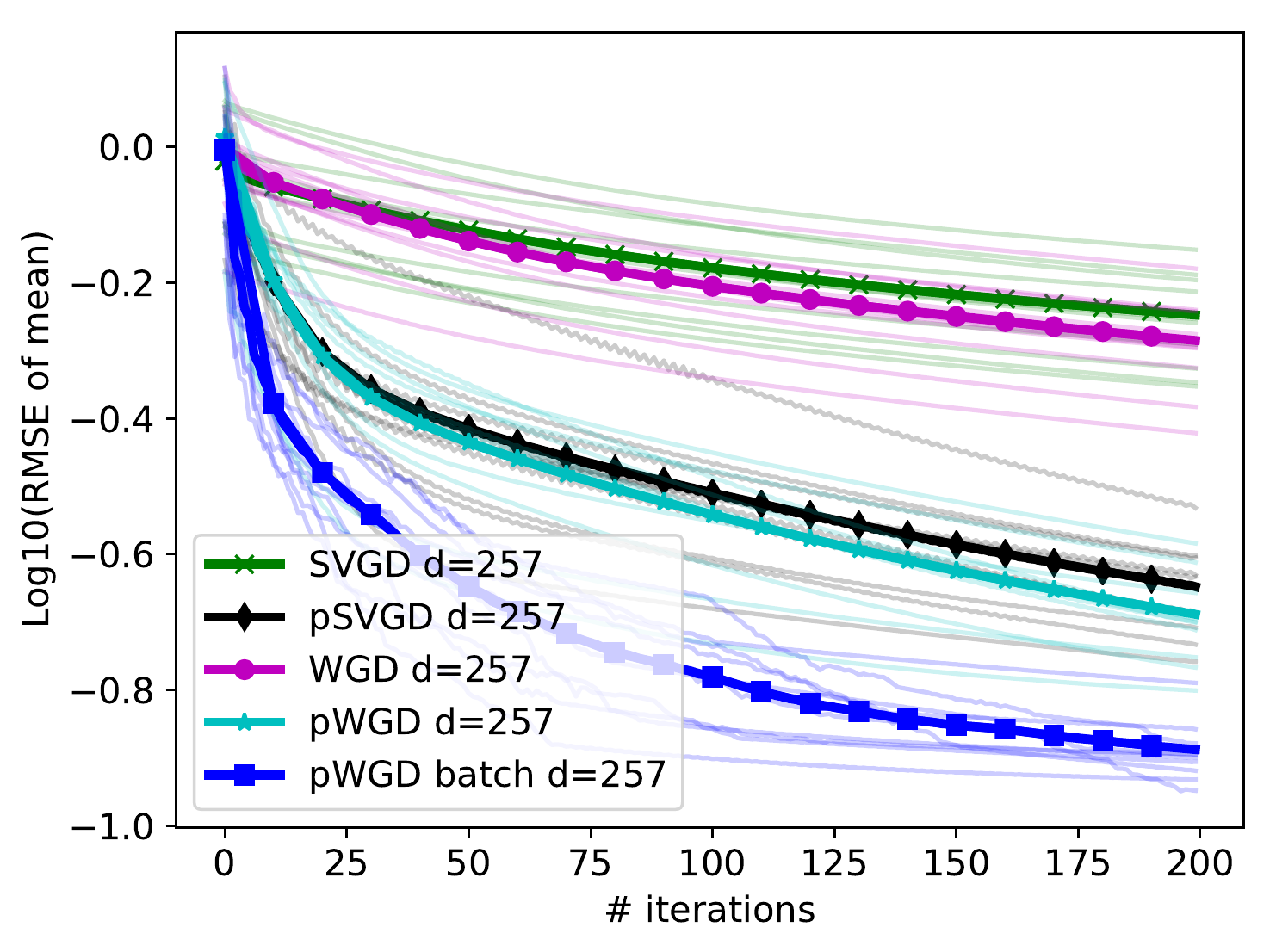}
\end{minipage}
\begin{minipage}[t]{0.45\textwidth}
\centering
\includegraphics[width=\linewidth]{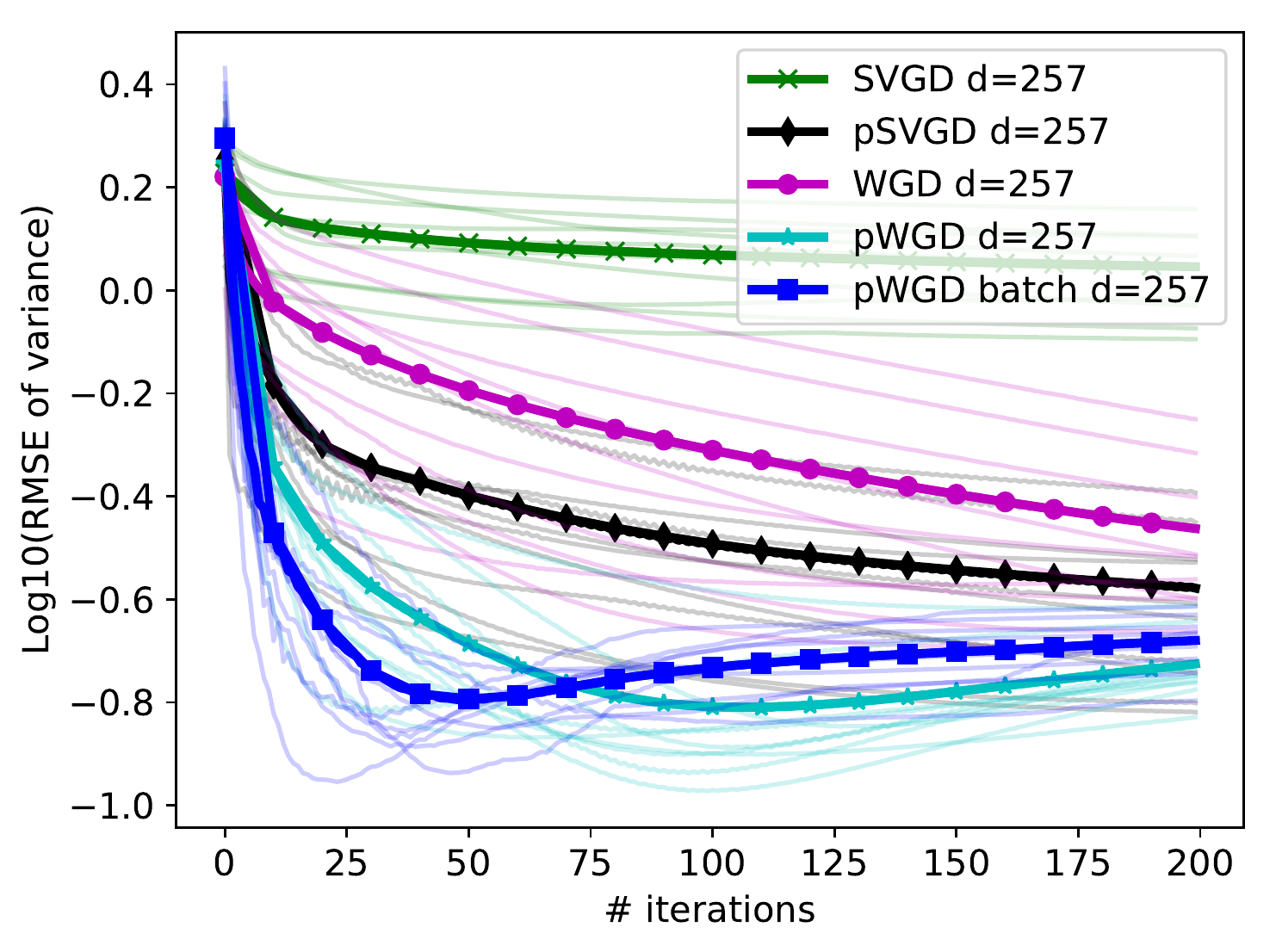}
\end{minipage}
\caption{\small Comparison on accuracy and convergence by RMSE (showing 10 trials and mean) for sample mean (left) and variance (right) among SVGD, pSVGD, WGD, pWGD, pWGD-batch.} \label{fig:linear}
\end{figure}

We compare pWGD, pWGD with batched KDE (pWGD-batch), WGD, SVGD, and pSVGD. For all compared methods, we use a (small) sample size $N=16$, projection tolerance $10^{-4}$ (leading to 8 dimensions of subspace). A smaller batch size ($5 < 8$) is used for pWGD-batch. We evaluate the accuracy of the sampling methods by the $L_2$-norm of the mean and point-wise variance of the parameter $x$ w.r.t. its poster distribution in Figure \ref{fig:linear}, which display the convergence of the root mean square error (RMSE) of the sample mean (left) and variance (right) for dimension $17, 65, 257$. We can observe from the right figures that with increasing dimensions, WGD and SVGD can not capture the variance as the samples collapse to the mean, while pWGD and pSVGD can preserve the accuracy of the variance. Both WGD and pWGD converge faster and achieve higher accuracy than SVGD and pSVGD. From the left figures we can see that pWGD-batch produces more accurate sample mean compared to pWGD with comparable accuracy for variance.

\subsection{Nonlinear Bayesian inference}
In this experiment, we consider a nonlinear Bayesian inference problem constrained by the following PDEs for subsurface (Darcy) flow 
\begin{equation}\label{eq:nonlinear-pde}
\begin{split}
    \mathbf{v} + e^x \nabla u & = 0 \quad \text{ in } D \\
    \nabla \cdot \mathbf{v} & = h \quad \text{ in } D \\
\end{split}
\end{equation}
where $u$ is pressure, $\mathbf{v}$ is velocity, $h$ is force, $e^x$ is the uncertain permeability field equipped with a Gaussian prior $\mathrm{x} \sim \mathcal{N}(x_0, C)$ with $C = (-\delta \Delta + \gamma I)^{-\alpha}$ where we set $\delta = 0.1, \gamma = 1, \alpha = 2$ and $x_0 = 0$. We set $D = (0, 1)^2$ and use a finite element method for the discretization of the problem. The data is generated as pointwise observation at 49 points equidistantly distributed in $(0, 1)^2$ corrupted with additive $5\%$ Gaussian noise. 

\begin{figure}[!htbp]
\centering
\begin{minipage}[t]{0.45\textwidth}
\centering
\includegraphics[width=\linewidth]{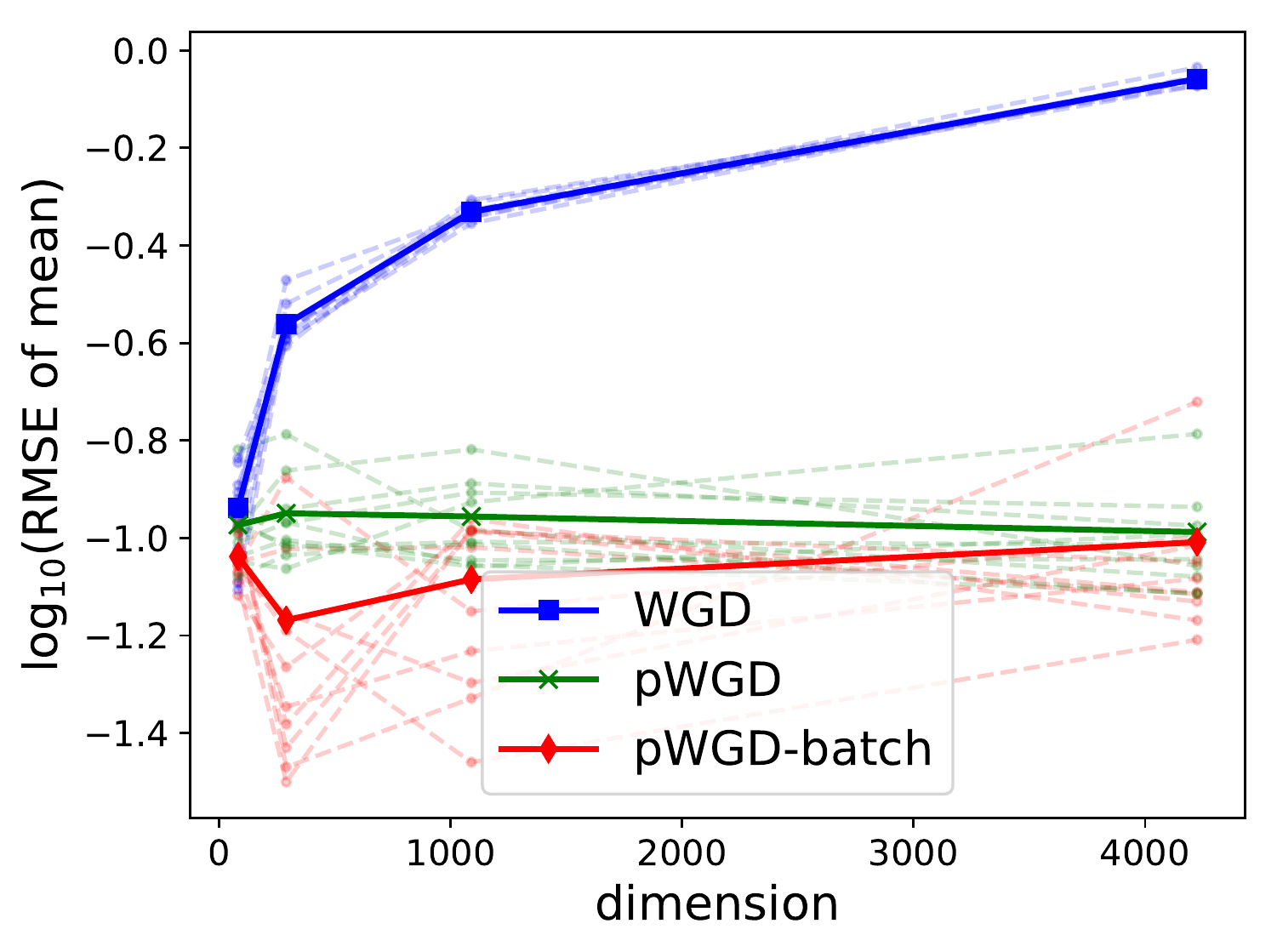}
\end{minipage}
\hspace{0.1cm}
\begin{minipage}[t]{0.45\textwidth}
\centering
\includegraphics[width=\linewidth]{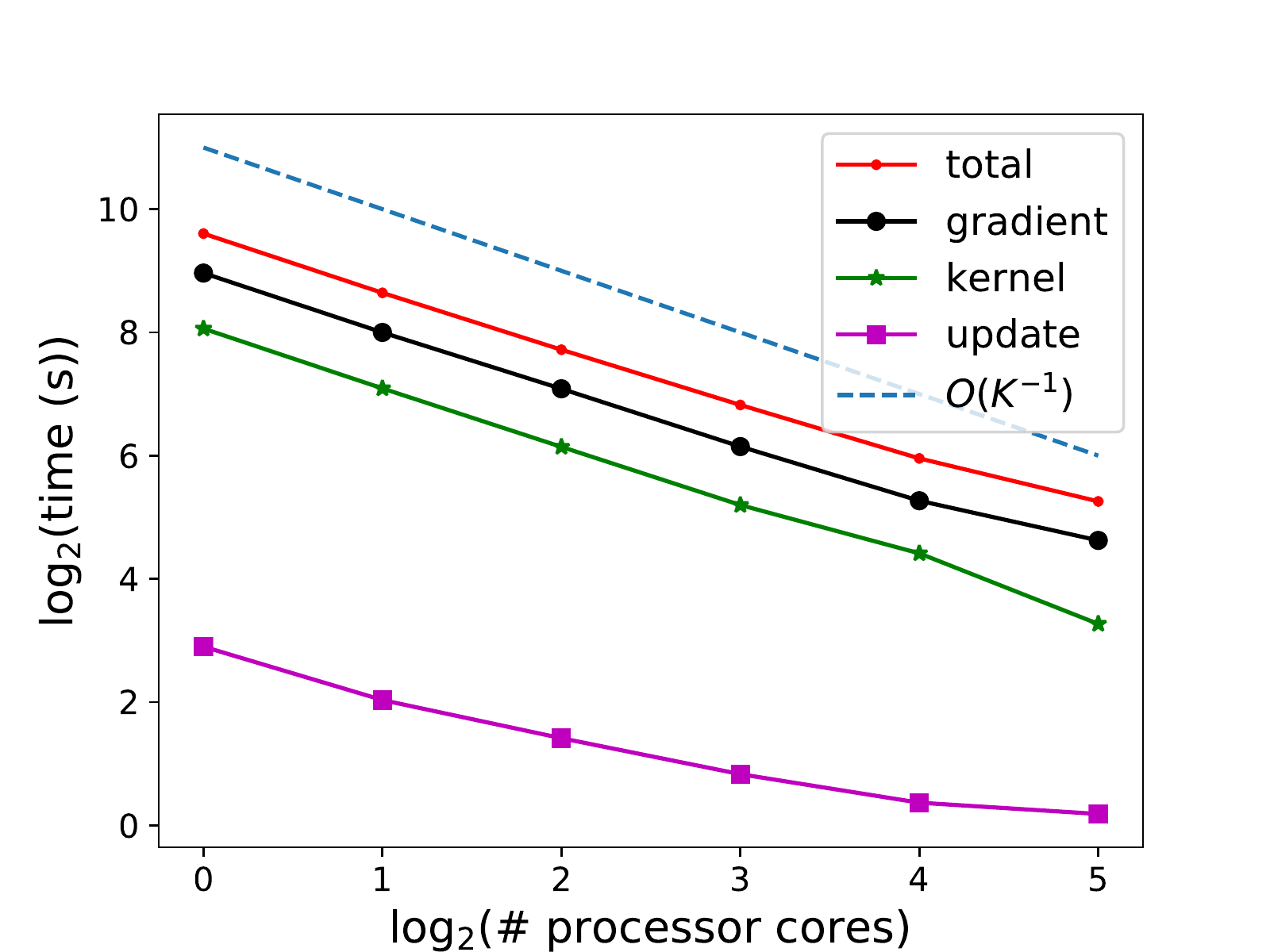}
\end{minipage}
\begin{minipage}[t]{0.45\textwidth}
\centering
\includegraphics[width=\linewidth]{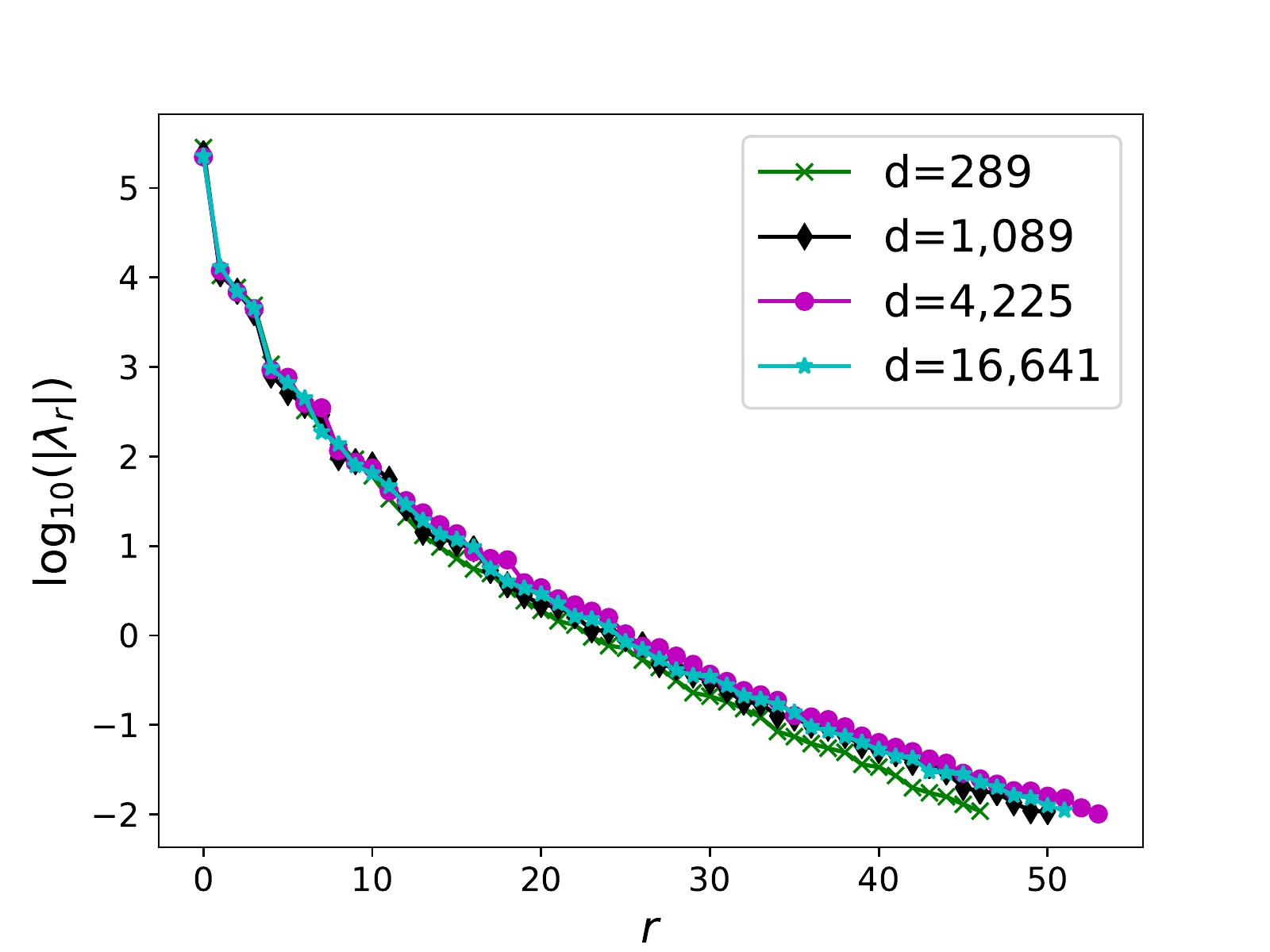}
\end{minipage}
\begin{minipage}[t]{0.45\textwidth}
\centering
\includegraphics[width=\linewidth]{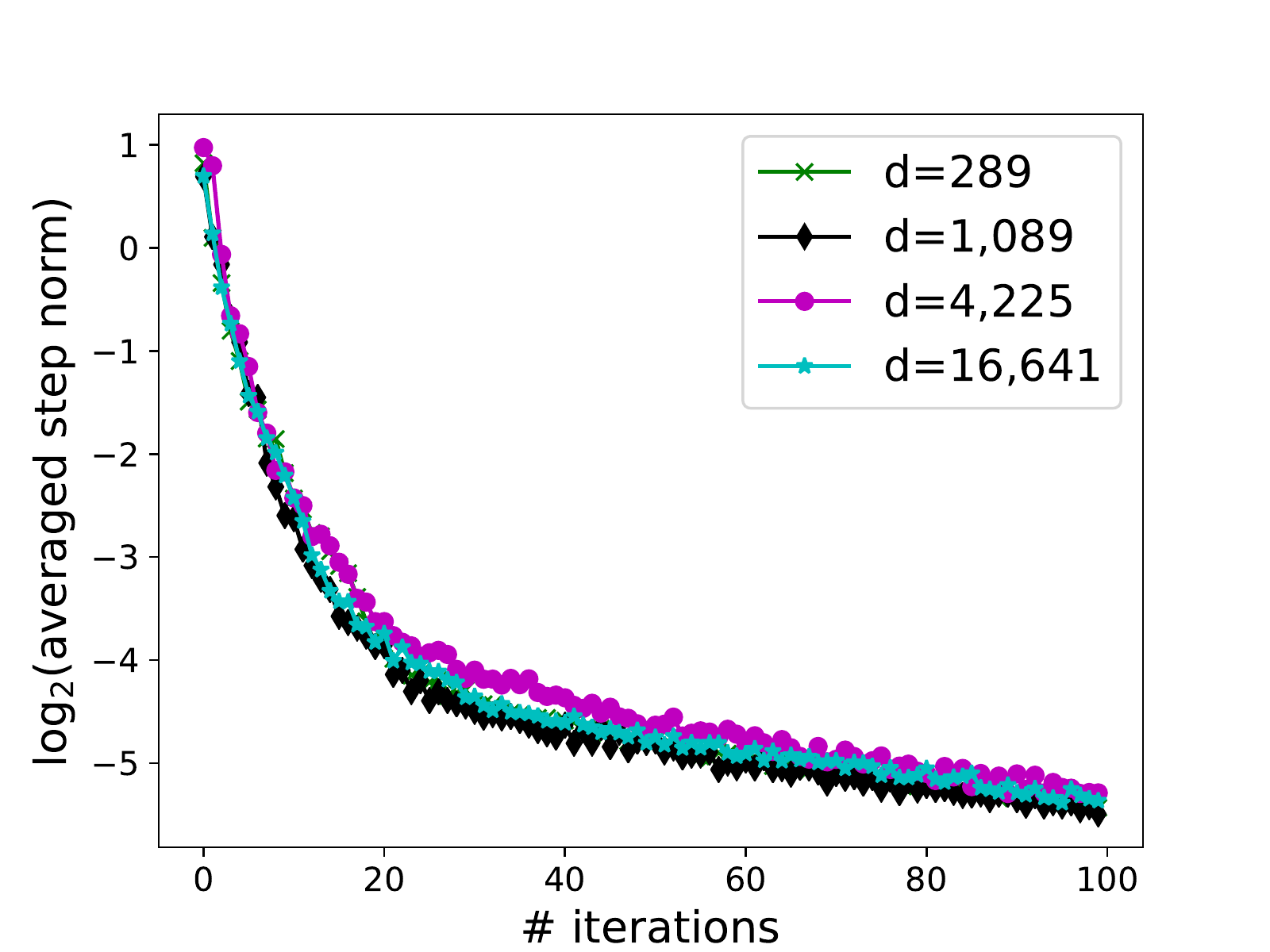}
\end{minipage}
\begin{minipage}[t]{0.45\textwidth}
\centering
\includegraphics[width=\linewidth]{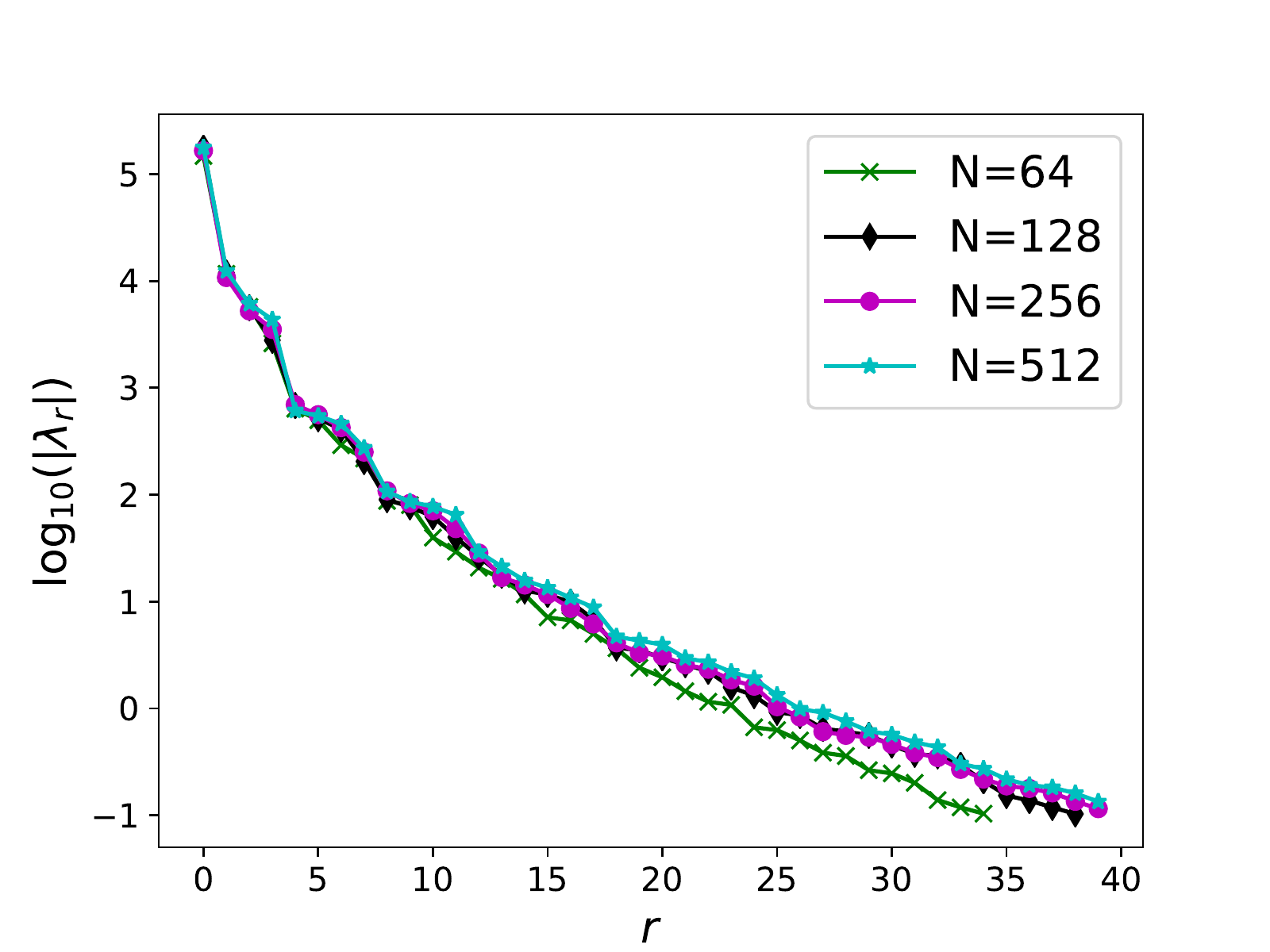}
\end{minipage}
\begin{minipage}[t]{0.45\textwidth}
\centering
\includegraphics[width=\linewidth]{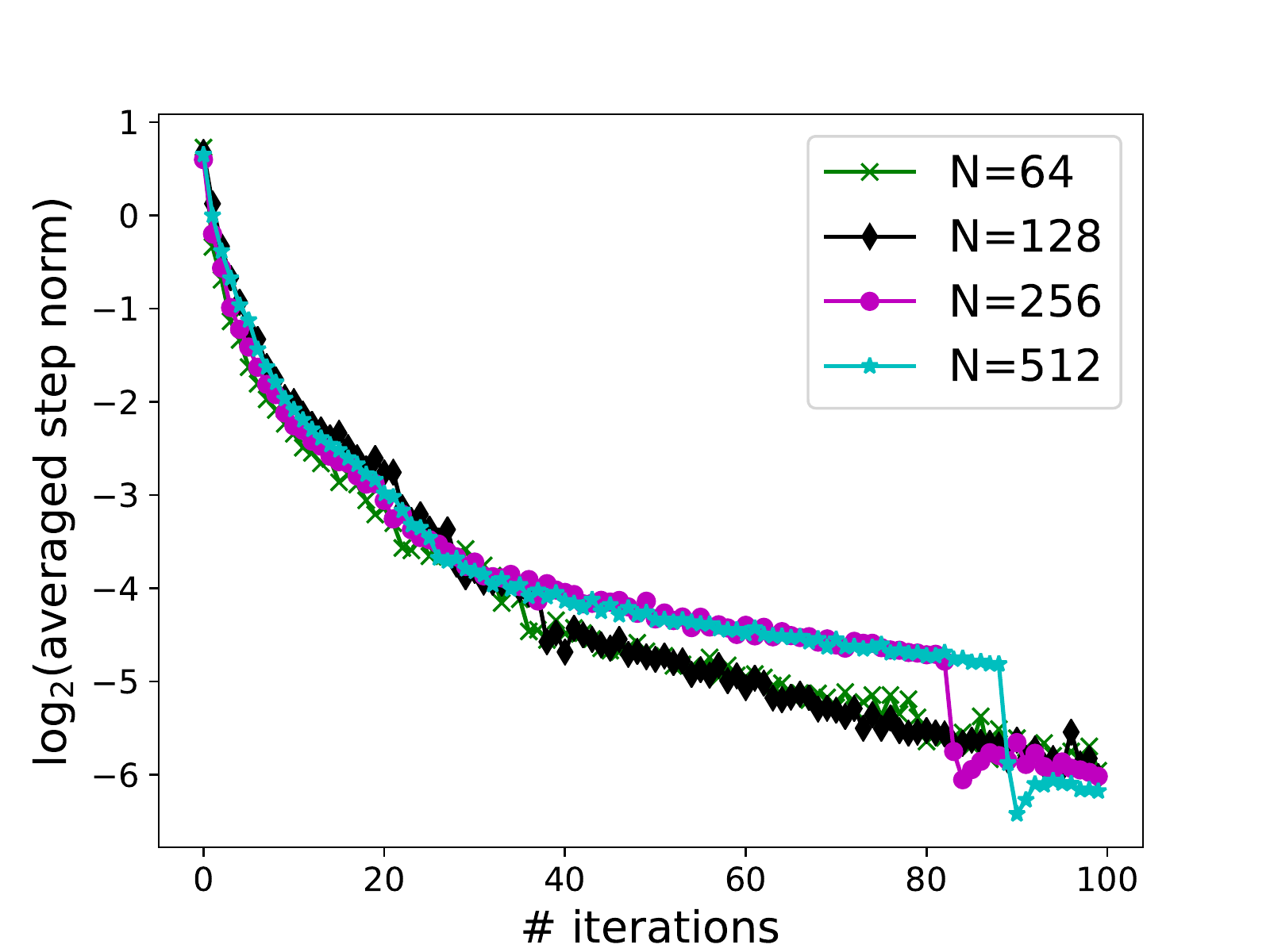}
\end{minipage}
\caption{Scalability of pWGD with respect to the parameter dimension, sample size, and CPU cores in terms of accuracy, decay of eigenvalues and step norms, and computational time.} \label{fig:nonlinear}
\end{figure}

We run the WGD and pWGD algorithms with 200 iterations for different parameter dimensions $d = 9^2, 17^2, 33^2, 65^2$, sample size $N = 64, 128, 256, 512$, in different CPU processor cores $K = 1, 2, 4, 8, 16, 32$ for their parallel implementation. The results are shown in Figure \ref{fig:nonlinear}. From the top-left we can see that both pWGD and pWGD-batch (with batch size 5) preserve the accuracy of the sample mean with 256 samples (compared to a reference value computed by a DILI-MCMC algorithm \citep{CuiLawMarzouk16} with 10000 samples), with the later gives slightly more accurate result, while WGD leads to increasing errors with respect to the parameter dimension. This can be explained by the similar fast decay of eigenvalues in the middle-left figure where the projection dimension does not change much. Moreover, pWGD have similar convergence in averaged sample step norm (norm of sample updates from one step to the next) for different parameter dimensions, as can be seen from the middle-right figure. Similar behavior of the decay of the eigenvalues and step norms can be observed with respect to increasing sample size as shown in the bottom two figures. Finally, as we increase the number of CPU cores, as seen from the top-right figure, the cost for different computational parts is reduced linearly. These results demonstrate the scalability of the parallel pWGD algorithm with respect to the parameter dimension, sample size, and processor cores, which indicate its feasibility to use supercomputers to solve high-dimensional Bayesian inference problems and produce many posterior samples.

\subsection{Bayesian inference for COVID-19}
Finally, we consider a real-world problem of Bayesian inference for the dynamics of the transmission and severity of COVID-19 using recorded data for New York from \url{https://github.com/COVID19Tracking}, as studied in  \citep{ChenGhattas20,ChenWuGhattas20}. We use a compartmental model for epidemics and the number of hospitalized cases as the observation data to infer the social distancing parameter of 96 dimensions with transformed Gaussian prior. More details on the setup for the model, parameter, and data can be found in \cite{ChenGhattas20}. We run WGD and pWGD using 128 samples with 8 samples in each of 16 processor cores. We update the projection bases for pWGD every 10 of 200 iterations. The eigenvalues have very fast decay as shown in Figure \ref{fig:covid-19}, indicating an intrinsic low dimensionality of the data-informed parameter subspace. We can also see from the middle figure that pWGD produces posterior samples that recover the data better than those of WGD with 90\% credible interval, which is due to the collapse of the WGD samples (especially before April) as can be seen from the bottom figure.

\begin{figure}[!htbp]
    \centering
    \includegraphics[width=0.6\linewidth]{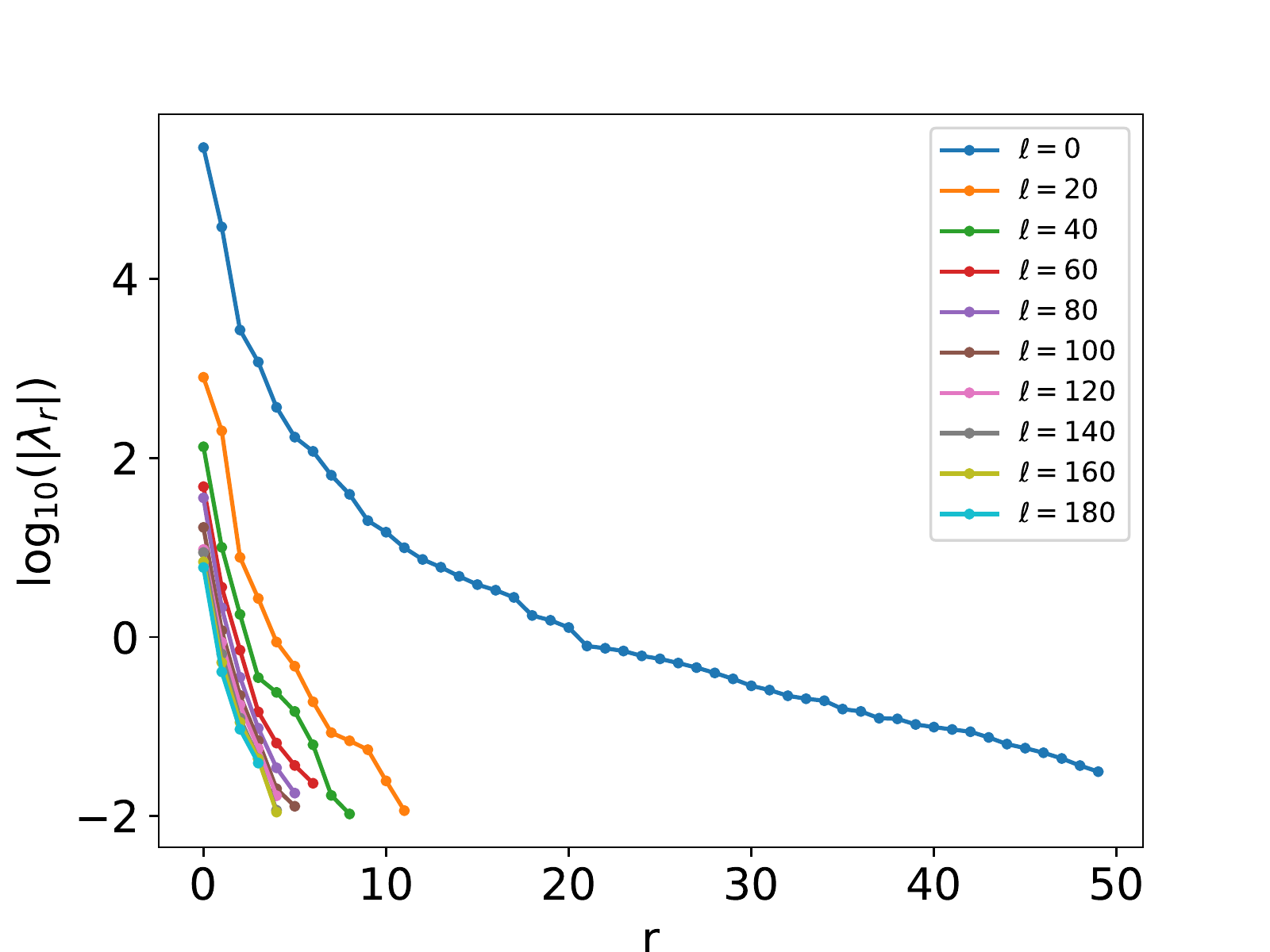}
    \includegraphics[width=0.6\linewidth]{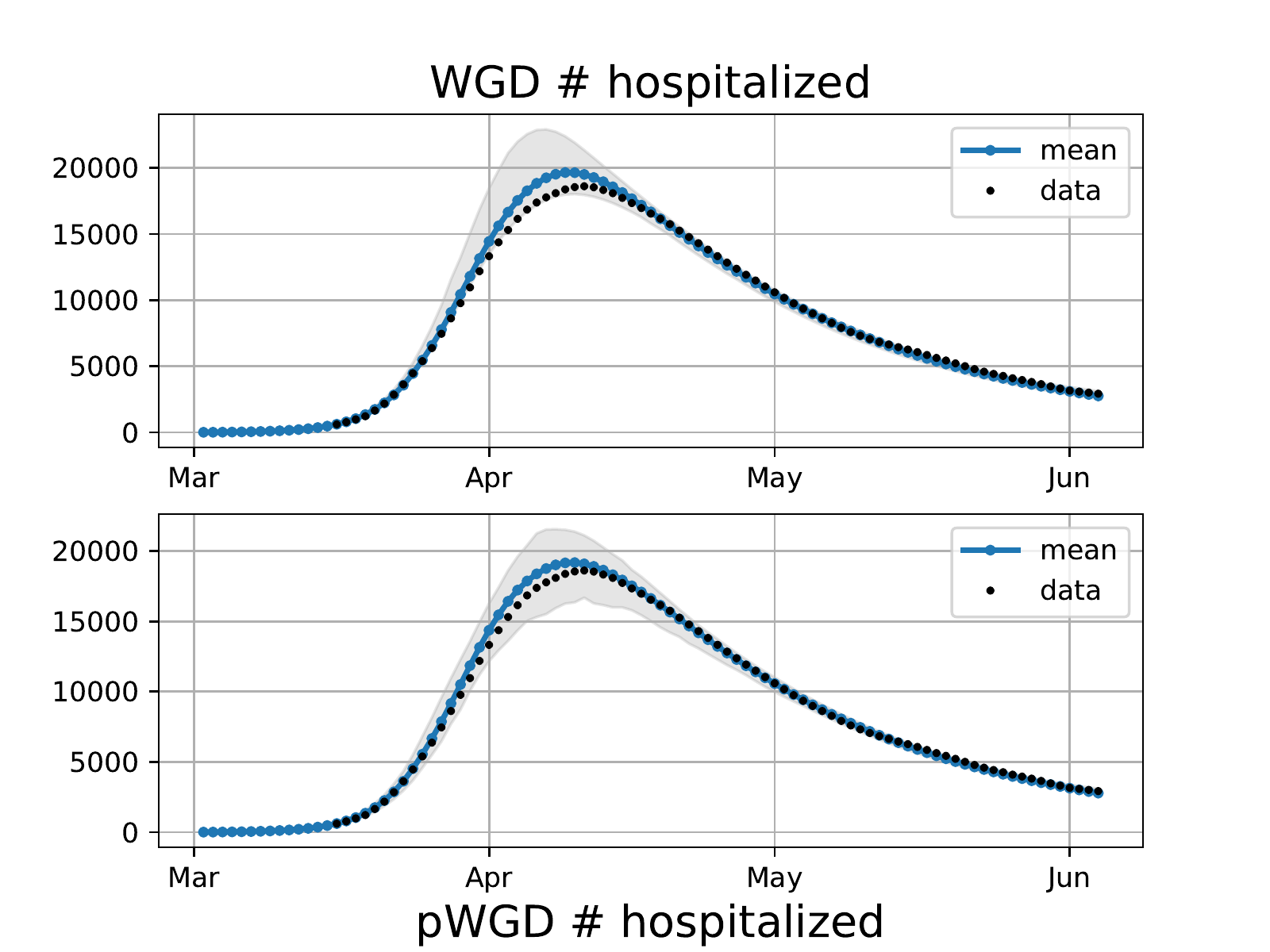}
    \includegraphics[width=0.6\linewidth]{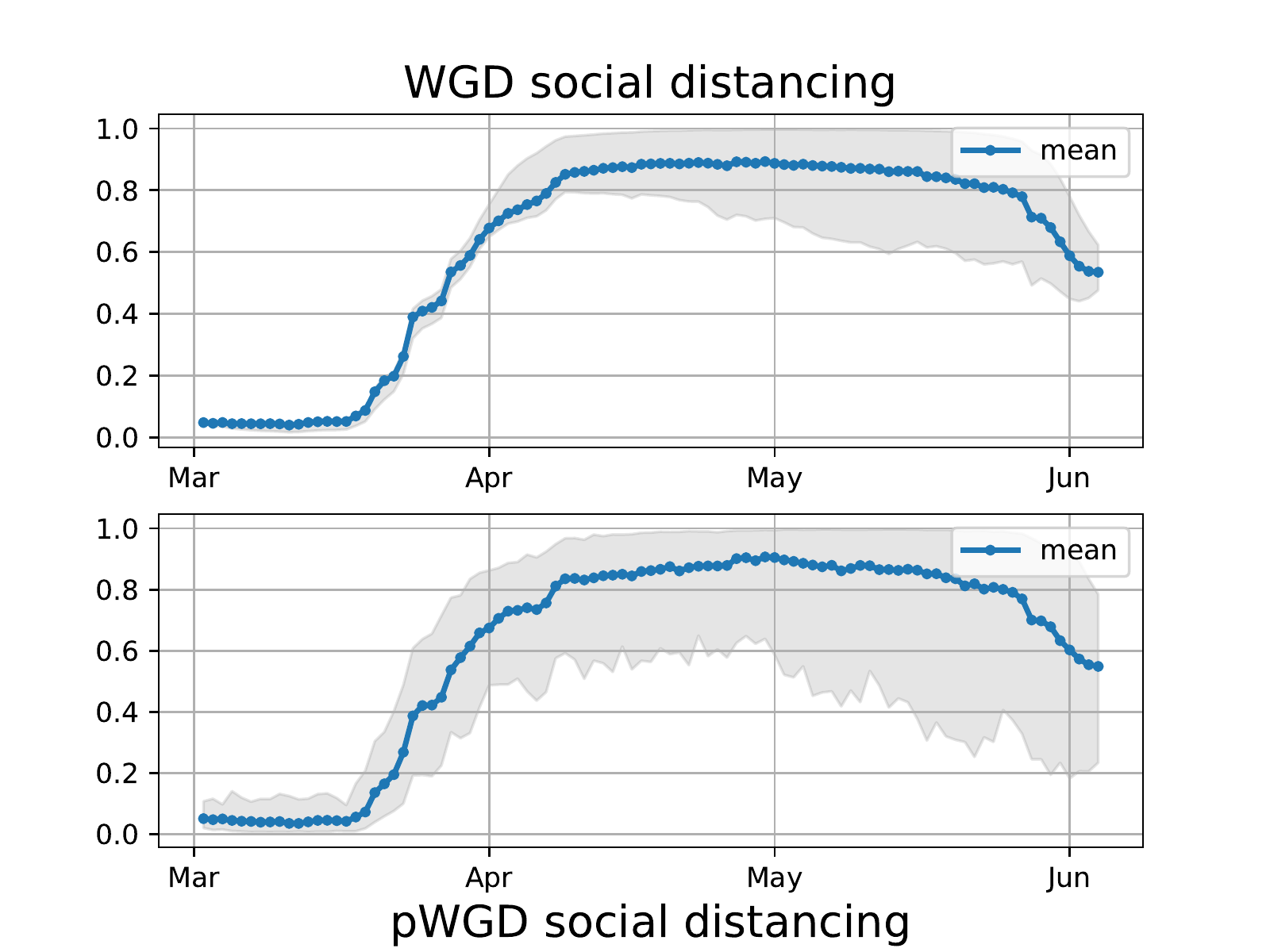}
    \caption{\small Comparison of WGD and pWGD for Covid-19. Top: decay of eigenvlaues at different iterations. Posterior samples (bottom) that predict data (middle) with 90\% credible interval.}
    \label{fig:covid-19}
\end{figure}

\section{Discussion}
In this paper, we develop a pWGD method for high-dimensional Bayesian inference, which effectively alleviates the curse of dimensionality faced in using KDE for approximating the sample density. We analyze the convergence property of pWGD through the lens of Wasserstein gradient flow under suitable assumptions. 
Moreover, we demonstrate the accuracy and convergence (compared to WGD, SVGD, and pSVGD), as well as scalability of the complexity (w.r.t.\ parameter dimension, sample size, processor cores) of pWGD by a variety of experiments. 
Further analyses for the convergence of pWGD and its application to other high-dimensional problems are of great interest. 

\section*{Acknowledgement}
Wang is supported by a department fellowship from the Department of Electrical Engineering in Stanford University. Chen was partially funded by the National Science Foundation, Division of Mathematical Sciences under award DMS-2012453; the Department of Energy, Office of Science, Office of Advanced Scientific Computing Research, Mathematical Multifaceted Integrated Capability Centers (MMICCS) program under award DE-SC0019303; and the Simons Foundation under award 560651. Li is supported by a start-up funding in Univeristy of South Carolina.

\bibliography{WIG,references}
\bibliographystyle{apalike}
\end{document}